\newcommand{\email}[1]{\protect\href{mailto:#1}{#1}}
\newcommand{\beqa}{\begin{eqnarray}}
\newcommand{\eeqa}{\end{eqnarray}}
\newcommand{\beqas}{\begin{eqnarray*}}
\newcommand{\eeqas}{\end{eqnarray*}}
\newcommand{\ba}{\begin{array}}
\newcommand{\ea}{\end{array}}
\newcommand{\bi}{\begin{itemize}}
\newcommand{\ei}{\end{itemize}}
\newcommand{\RN}[1]{%
  \textup{\uppercase\expandafter{\romannumeral#1}}%
}
\newtheorem{lemma}{Lemma}
\newtheorem{thm}{Theorem}
\newtheorem{defi}{Definition}
\newcounter{spb}
\def\b0{\bm{0}}
\def\b1{\bm{1}}
\def\ba{\bm{a}}
\def\R{\mathbb{R}}
\begin{document}
\title{Attention-Only Transformers via Unrolled Subspace Denoising\footnote{This work has been accepted by the 42nd International Conference on Machine Learning (ICML 2025). Correspondence to Peng Wang (\email{peng8wang@gmail.com}).}}

\author[1]{Peng Wang}
\author[1]{Yifu Lu}
\author[2]{Yaodong Yu}
\author[2]{Druv Pai}
\author[1]{Qing Qu}
\author[3]{Yi Ma}  

\affil[1]{\small Department of Electrical Engineering and Computer Science, University of Michigan, Ann Arbor}
\affil[2]{\small Department of Electrical Engineering and Computer Science, University of California, Berkeley}
\affil[3]{\small Institute of Data Science \& School of Computing and Data Science, University of Hong Kong} 

\maketitle
\begin{abstract}
Despite the popularity of transformers in practice, their architectures are empirically designed and neither mathematically justified nor interpretable. Moreover, as indicated by many empirical studies, some components of transformer architectures may be redundant. To derive a fully interpretable transformer architecture with only necessary components, we contend that the goal of representation learning is to compress a set of noisy initial token representations towards a mixture of low-dimensional subspaces. To compress these noisy token representations, an associated denoising operation naturally takes the form of a multi-head (subspace) self-attention. By unrolling such iterative denoising operations into a deep network, we arrive at a highly compact architecture that consists of \textit{only} self-attention operators with skip connections at each layer. Moreover, we show that each layer performs highly efficient denoising: it improves the signal-to-noise ratio of token representations \textit{at a linear rate} with respect to the number of layers. Despite its simplicity, extensive experiments on vision and language tasks demonstrate that such a transformer can already achieve performance close to that of standard transformer architectures such as GPT-2 and CRATE. 
\end{abstract}

\section{Introduction}\label{sec:intro}
 
Over the past decade, transformers \citep{vaswani2017attention} have achieved remarkable empirical success across various modern machine learning applications, including large language models (LLMs) \citep{Brown2020,devlin2018bert}, vision generative models \citep{bao2023all,chen2020generative}, and reinforcement learning \citep{chen2021decision}. Transformer architectures are generally constructed by stacking multiple identical layers designed to process and learn from data. Each layer is composed of several interacting components arranged in a specific sequence, including multi-head self-attention (MHSA) operators, layer normalization, multilayer perceptron (MLP) networks, and skip connections. In practice, transformers, such as BERT \citep{devlin2018bert} and GPT-4 \citep{achiam2023gpt}, are highly deep, often with dozens or even hundreds of layers, and significantly over-parameterized, containing millions or even billions of parameters. This considerable depth and over-parameterization endow transformers with impressive learning capabilities, allowing them to capture complex patterns and relationships.   

Despite the remarkable success of transformers, their deep and over-parameterized architecture renders them ``black boxes'', hindering an understanding of their inner mechanism. To address this challenge, a common approach involves systematically removing or modifying certain components in transformers to simplify the architecture; see, e.g., \citep{alcalde2024clustering,dong2021attention,geshkovski2023emergence,geva2020transformer,guo2024slab,noci2024shaped}. For example,  \citet{alcalde2024clustering} studied pure-attention hard-max transformers with skip connections and showed that the output converges to a clustered equilibrium as the number of layers goes to infinity. \citet{noci2024shaped} analyzed a modified softmax-based attention model with skip connections, demonstrating that the limiting distribution can be described by a stochastic differential equation. These studies indicate that the most basic components of transformers are self-attention layers and skip connections. Although existing studies have provided valuable insights into different components of transformers, few of them elucidate the underlying mechanisms by which transformers process and transform input into output across layers.  

Existing empirical studies suggest that some components of transformers may not be essential and can be removed or modified without compromising performance. For example, \citet{hesimplifying} empirically demonstrated that transformer architecture can be simplified by removing components such as skip connections, value matrix, and normalization layers without degrading performance. 
Additionally, \citet{sukhbaatar2019augmenting} investigated the effects of removing MLP blocks from transformers and augmenting the self-attention layers to play a similar role to MLP blocks, showing that performance can be preserved. 
Similarly, \citet{pires2023one} examined the potential for reducing the frequency of MLP layers in transformers. 
Other works also studied other simplifications of transformers, such as linear attentions \citep{katharopoulos2020transformers} and shared-QK attentions \citep{kitaev2020reformer}. Based on these discussions, this work focuses on addressing the following question:
\vspace{-3mm}
\begin{center}{\em 
Can we design a minimalistic transformer architecture consisting of fully interpretable layers that achieves performance close to that of standard transformers?} 
\end{center}

\subsection{Related Works}
 
\paragraph{Existing studies on self-attention mechanisms.} It is widely believed that the power of transformers primarily stems from their self-attention layers, which enable the model to capture long-range dependencies and contextual relationships between tokens by dynamically weighing token relationships across the input sequence \citep{tsai2019transformer,vaswani2017attention}. To explore the mechanism behind self-attention, numerous studies have investigated the performance of pure self-attention networks, often incorporating only one additional component to prevent rank collapse and maintain expressiveness. For example, \citet{dong2021attention} showed that in pure-attention transformers without skip connections and MLP layers, token representations collapse exponentially to a rank-1 matrix across layers. They also showed that self-attention networks with skip connections prevent rank collapse. \citet{geshkovski2023emergence,geshkovski2023mathematical} studied the dynamics of multi-head self-attentions and characterized clustering behaviors of learned representations. Recently, \citet{wu2024role} showed that pure self-attention networks with LayerNorm can prevent rank collapse. While these studies have advanced the theoretical understanding of self-attention mechanisms in simplified transformer architectures, they cannot provide any empirical validation on real-world vision or language tasks, offering little insight into the role of self-attention in practice. 

\begin{wrapfigure}{R}{0.32\textwidth}
\vspace{-.12in}\footnotesize
\begin{center}
\includegraphics[trim = 8 8 6 6,width=.16\textwidth]{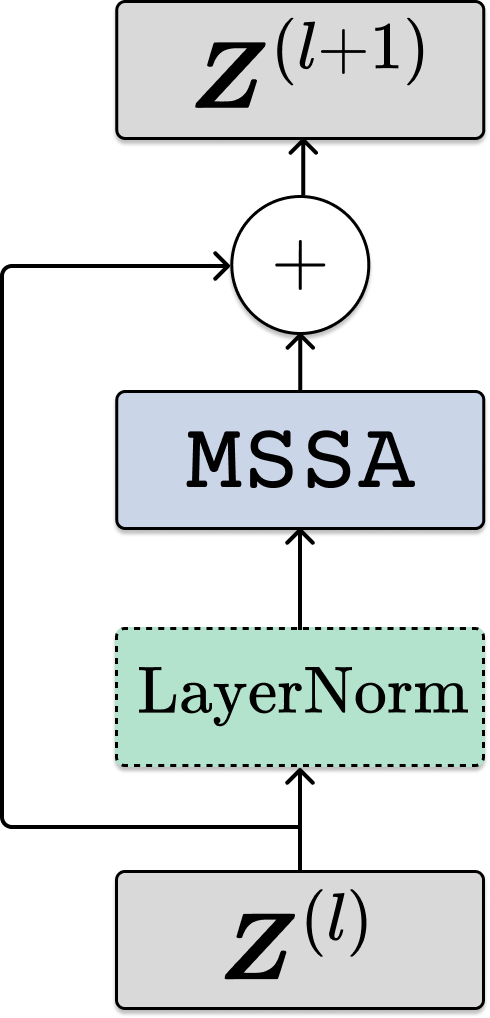}    
\end{center}\vspace{-0.1in}
\caption{\footnotesize \textbf{Each layer of the proposed attention-only transformer architecture.}}\label{fig:1}  
\end{wrapfigure} 

\paragraph{Network architecture design via unrolled optimization.} It is commonly believed that the success of modern deep networks largely stems from their ability to transform the raw data into compact and structured representations, which facilitates downstream tasks \citep{chan2022redunet,chen2023minimalistic,ma2022principles,yu2023sparse}. A principled and interpretable approach to learning such representations with transformers is to construct an architecture that incrementally transforms tokens into these representations via unrolling iterative optimization steps as layers of a deep network  \citep{chan2022redunet,monga2021algorithm,wang2016proximal,yu2023white,zhang2018ista}.  Notably, \citet{monga2021algorithm} demonstrate that such unrolled networks are more interpretable, parameter-efficient, and effective compared to generic networks. In this approach, each iteration of an algorithm for learning compact and structured representations is represented as one layer of deep networks. For example,  \citet{gregor2010learning} have demonstrated that sparse coding algorithms, such as ISTA, can be used to construct MLPs. Recently, \citet{chan2022redunet} constructed a ``white-box'' network based on an iterative gradient descent scheme to optimize the maximal coding rate reduction objective. More recently, \citet{yu2023sparse} designed a ``white-box'' transformer architecture by implementing an approximate alternating minimization to optimize the sparse rate reduction objective. The proposed transformer achieves performance comparable to some popular ones such as ViT \citep{dosovitskiy2020image}, BERT \citep{devlin2018bert}, and DINO \citep{caron2021emerging} on vision tasks. % Notably, a key component in their design is the multi-head subspace self-attention (MSSA) operator (see Eq. (\ref{eq:MSSA})). While they argued that this operator can denoise token representations, they only showed that the negative gradient of the compression term of the objective points to the denoising direction, without providing an accurate analysis or guarantee for the denoising efficiency. The MSSA's denoising capabilities remain an open question. 

\paragraph{Linear representation and superposition hypotheses.} Recent empirical studies of language tasks have raised the ``{\em linear representation hypothesis}'', which posits that token representations can be linearly encoded as one-dimensional feature vectors in the activation space of LLMs \citep{jiang2024origins,park2023linear}, and ``{\em superposition hypothesis}'', which further hypothesizes that token representations are a sparse linear combination of these feature vectors \citep{elhage2022toy,yun2021transformer}. Building on these hypotheses, various approaches have been proposed to understand and utilize token representations. For example, \citet{templeton2024scaling} employed sparse autoencoders to decompose the token representations of Claude 3 Sonnet into more interpretable components. \citet{luo2024pace} leveraged sparse dictionary learning to explore token representations, decomposing them into interpretable components based on a concept dictionary. Recently, \citet{engels2024not} conjectured that token representations in LLMs are the sum of many sparse multi-dimensional features. This conjecture is supported by their experiments on GPT-2 and Mistral 7B, where they used sparse autoencoders to identify multi-dimensional features. Notably, all of these empirical studies conclude that \textit{the token representations lie on a union of (possibly many) low-dimensional subspaces}.

\subsection{Our Contributions}
 
% Based on the above discussions, we use a simple yet effective model for the token representations that accurately reflects the behaviors of trained transformers (such as LLMs) based on the previously referenced empirical studies. That is, we model the underlying distribution of token representations as a mixture of low-rank Gaussians corrupted by noise (see \Cref{def:MoG}). Specifically, each token representation lies in a subspace corrupted by the noise from other spaces (see Eq. (\ref{eq:MoG})). To denoise these token representations, we employ the multi-head subspace self-attention (MSSA) operator proposed in \citep{pai2023masked,yu2023sparse} to incrementally update the token representations (see Eq. (\ref{eq:MSSA})). Then, our contributions can be summarized as follows: 
Based on the above discussions and motivated by referenced empirical findings, we propose a simple yet evocative model for the structure of token representations in trained transformers. Specifically, we model the underlying distribution of token representations as a mixture of low-rank Gaussians, each supported on a subspace and corrupted by noise (see \Cref{def:MoG}). Then, the goal of representation learning is to denoise a set of noisy initial token representations towards the corresponding subspaces. Our contributions are summarized as follows:   

\begin{itemize}[leftmargin=*]

\item[$\bullet$] {\bf Attention-only transformer via unrolled optimization.} Under the mixture of low-rank Gaussian model, we interpret multi-head (subspace) self-attention as a denoising operator, which compresses noisy token representations into their corresponding supporting subspaces. By iteratively unrolling the multi-head (subspace) self-attention operator, we construct a new transformer architecture with a streamlined design, consisting of only self-attention layers with skip connections (see \Cref{fig:1}).\footnote{In practice, LayerNorm layers may be added to enhance performance.} This design is much simpler compared to standard transformers.  

\item[$\bullet$] {\bf Theoretical guarantees for the proposed transformer.} To quantify the denoising performance of the proposed transformer, we introduce a signal-to-noise (SNR) metric (see Eq. (\ref{eq:SNR})) for the token representations. We prove that each layer of the proposed transformer improves the SNR at a linear rate when the initial token representations are sampled from a noisy mixture of low-rank Gaussians (see Theorem \ref{thm:1}). This indicates that the multi-head (subspace) self-attention operator is highly effective in denoising token representations towards their corresponding subspaces.  

\item[$\bullet$] {\bf Understanding the roles of self-attention and MLP layers.} Notably, the proposed transformer is a valuable model for understanding the mechanism of attention, as it ablates the effect of MLP layers. Moreover, comparing the proposed transformer to standard transformers provides insights into the specific role and empirical benefits of the MLP layers in different tasks, such as for images and texts (see experiments in Section \ref{sec:experimennts}). 
\end{itemize} 

Finally, we conduct extensive experiments on both vision and language tasks, including supervised image classification, causal language modeling, and in-context learning, to complement our theory and demonstrate the potential of our proposed transformer architecture. {\em We emphasize that the goal of our experiments is not to strive for state-of-the-art performance for these tasks. Instead, they are intended to validate our theory about the components of the transformer.} 

%We have conducted extensive experiments on both language and vision tasks, including causal language modeling, in-context learning, and supervised image classification, to complement our theory and demonstrate the potential of our proposed transformer architecture. These experiments highlight its ability to handle complex real-world applications, thereby confirming the practical value of our streamlined attention-only transformer design.

\paragraph{Notation.}  Given an integer $n$, we denote by $[n]$ the set $\{1,\dots,n\}$. Given a vector $\bm a$, let $\|\bm a\|$ denote the Euclidean norm of $\bm a$ and $\mathrm{diag}(\bm a)$ denote the diagonal matrix with $\bm a$ as its diagonal.  Given a matrix $\bm A$, let $\|\bm A\|$ denote the spectral norm of $\bm A$, $\|\bm A\|_F$ denote the Frobenius norm, and $a_{ij}$ denote the $(i,j)$-th element. For sequences of positive numbers $\{a_n\}$ and $\{b_n\}$, we write $a_n \lesssim b_n$ or $b_n \gtrsim a_n$ if there exists an absolute constant $C > 0$ such that $a_n \le C b_n$. Given a constant $\tau > 0$, we define $\mathbb{I}(x > \tau) = 1$ if $x > \tau$ and  $\mathbb{I}(x > \tau) = 0$ otherwise. We use $\mathcal{O}^{n\times d}$ to denote the set of all $n\times d$ matrices that have orthonormal columns.

\section{Technical Approach and Justification}\label{sec:setup}

%\begin{align*}
%% {\color{green} -C_j^{\ell} z^{\ell}} 
%{\color{red} E_j^{\ell} z^{\ell}} 
%%\bm Z^{\ell} \\
%%\bm Z^{\ell+1} \\
%%\bm Z^{\ell+1} = \bm Z^{\ell} + \eta \frac{\partial \nabla R}{\partial \bm Z} \Big\vert_{\bm Z^\ell} 
%\end{align*}

In this section, we introduce the basic setup of transformers for learning representations from real-world data. Real-world data, such as images, videos, and text, are often modeled as random samples drawn from a high-dimensional probability distribution with low-dimensional intrinsic structures \citep{wright2022high}. Instead of directly inputting raw data samples into transformers, a common preprocessing step is to convert each sample into a sequence of tokens, where each token represents a localized segment of the data, such as an image patch, a snippet of text, or a frame in a video. Consequently, the input to transformers is typically a sequence of tokens denoted as $\bm X = [\bm x_1,\dots,\bm x_N] \in \R^{D \times N}$. Then, the goal of transformers is to learn a map $f$ that transforms these tokens into structured and compact token representations that facilitate downstream tasks, such as classification \citep{dosovitskiy2020image}  and generation \citep{saharia2022photorealistic}, by capturing the underlying patterns in the data.   

\subsection{Learning Token Representations via Unrolled Optimization}\label{subsec:arch}

In this subsection, we introduce how to learn token representations using the approach of unrolling optimization algorithms \citep{chan2022redunet,gregor2010learning,monga2021algorithm,sun2019supervised,wang2016proximal,yu2023sparse,zhang2018ista}. Specifically, this approach constructs each layer of a neural network according to a step of an iterative optimization algorithm. That is, the network's architecture is designed to implement a specific optimization algorithm, where each layer corresponds to a single iterative step. By unrolling the algorithm, we construct a ``white-box'' transformer architecture as a multi-layer neural network that incrementally transforms input tokens into structured and compact representations. This iterative process can be described as follows: 
\begin{align*} 
f:\bm X \overset{f^0}{\longrightarrow} \bm Z^{(0)} \!\overset{f^1}{\longrightarrow} \!\cdots \!\overset{f^l}{\longrightarrow} \bm Z^{(l)} \overset{f^{l+1}}{\longrightarrow}\! \cdots \! \overset{f^L}{\longrightarrow} \bm Z^{(L)} =: \bm Z,    
\end{align*}
where $f^0:\R^{D\times N} \to \R^{d\times N}$ is a pre-processing mapping (e.g., positional encoding, token embedding) that transforms input tokens $\bm X \in \R^{D\times N}$ to initial token representations $\bm Z^{(0)} \in \R^{d\times N}$, $f^l:\R^{d\times N} \to \R^{d\times N}$ denotes an iterative step or \textit{layer}, and $\bm Z^{(l)}$ denotes the token representations at the $l$-th layer for each $l \in [L]$. Then, a key question is {\em how to design the operator $f^l$ at each layer to learn meaningful token representations in a principled manner}.

\subsection{A Model for Token Representations}

Before we design such an operator $f^l$, we model the structure of token representations in pretrained LLMs. Notably, extensive works \cite{engels2024not,luo2024pace,templeton2024scaling} have empirically demonstrated that token representations in trained LLMs usually approximately lie in a union of low-dimensional subspaces. These subspaces encode distinct semantic meanings, capturing various linguistic or contextual features that contribute to the model's overall understanding and performance. This motivates us to model the token representations as follows: 

\begin{defi}\label{def:MoG}
Let $C_1,\dots,C_K$ be a partition of the index set $[N]$ and $\bm U_k \in \mathcal{O}^{d \times p_k}$ denote the orthonormal basis of the $k$-th subspace for each $k \in [K]$. We say that the token representations $\{\bm z_i \}_{i=1}^N \subseteq \R^d$ are sampled from a mixture of noisy low-rank Gaussian distributions if for each $k \in [K]$,
\begin{align}\label{eq:MoG}
    \bm z_i = \underbrace{\bm U_{k} \bm a_i}_{\bf signal} + \underbrace{\sum_{j \neq k}^K \bm U_j \bm e_{i,j}}_{\bf noise},\ \forall i \in C_k, 
\end{align}
where $\bm{a}_i \overset{i.i.d.}{\sim} \mathcal{N}(\bm{0},\bm{I}_{p_k})$ and $\bm{e}_{i,j} \overset{i.i.d.}{\sim} \mathcal{N}(\bm{0},\delta^2\bm{I}_{p_j})$ for all $i \in C_k$ and $k \in [K]$, $\{\bm{a}_i\}$ and $\{\bm{e}_{i,j}\}$ are respectively mutually independent, and $\{\bm{a}_i\}$ is independent of $\{\bm{e}_{i,j}\}$.    
\end{defi} 

\begin{figure*}[t]
\begin{center}
        \includegraphics[width=0.85\textwidth]{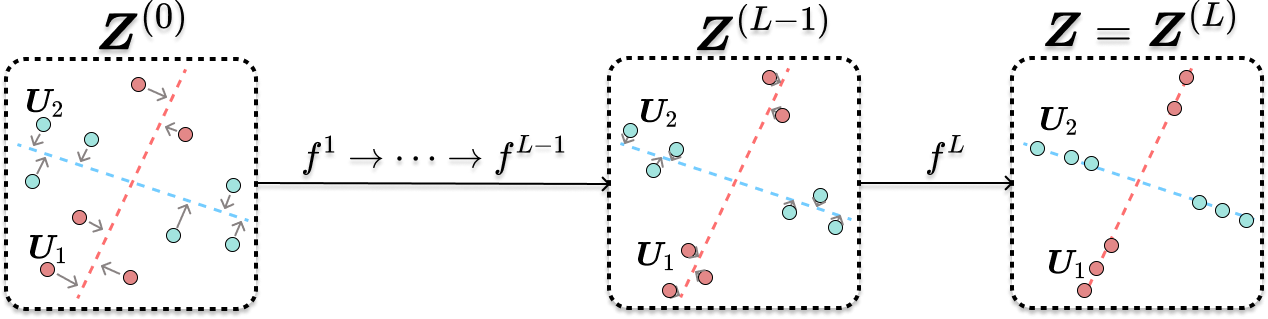}
    \caption{\textbf{Layers of transformers $f^l$ gradually denoise token representations towards their corresponding subspaces.} } \label{fig:denoise}
\end{center} 
\end{figure*} 

Before we proceed, let us make some remarks on this model. 
\begin{itemize}[leftmargin=*] 
\item {\bf An idealized model for token representations.} This model serves as an idealized framework for approximating token representations in real-world pretrained LLMs. It assumes that the token representations are sampled from a mixture of multiple low-rank Gaussian distributions with noise. Under this model, the goal of representation learning is to compress a set of noisy initial token presentations into the corresponding subspace. We should point out that in real-world applications, where token representations exhibit more complicated structures, the goal of representation learning is to find a compact and structured representation via compressing token sets, as argued in \citet{yu2023sparse}. In addition, this model has been widely used in other machine learning problems, such as subspace clustering \cite{elhamifar2013sparse,wang2022convergence} and diffusion models \cite{wang2024diffusion}.   

\item {\bf Connections to hypotheses on token representations.} This model aligns well with two well-established hypotheses about the structure of token representations in pretrained LLMS: the ``linear representation hypothesis''  \citep{jiang2024origins,park2023linear} and the ``superposition hypothesis'' \citep{elhage2022toy,yun2021transformer,arora2018linear}. The linear representation hypothesis posits that token representations in LLMs lie in low-dimensional linear subspaces that encode semantic features. Similarly, the superposition hypothesis suggests that these representations can be approximately expressed as a sparse linear combination of these feature vectors. In the context of our model, each basis $\bm U_k$ of the subspaces can be interpreted as a set of semantic features, where each feature corresponds to a specific aspect of the token's meaning. Token representations are then approximately expressed as sparse linear combinations of these subspace bases, capturing the essential semantic components of the token while ignoring irrelevant dimensions.  
\end{itemize}  

\subsection{Denoising Operator for Token Representations}\label{subsec:operator}

Now, we introduce a denoising operator to compress token representations into the corresponding subspace when the initial token representations $\bm Z^{(0)}$ is generated according to \Cref{def:MoG}. To simplify our development, we assume that the subspaces in \Cref{def:MoG} are orthogonal to each other, i.e., $\bm U_k^T\bm U_j = \bm 0$ for all $k \neq j$. Note this assumption is not restrictive, as in high-dimensional spaces, random low-dimensional subspaces are incoherent to each other with high probability, i.e., $\bm U_k^T\bm U_j \approx \bm 0$ \citep{wright2022high}.\footnote{One may straightforwardly generalize our results to non-orthogonal subspaces, with slightly more sophisticated analysis.}  

\paragraph{Multi-head subspace self-attention.} Without loss of generality, we rearrange the initial token representations $\bm Z^{(0)}$ such that those from the same subspace are concatenated together, i.e., $\bm Z^{(0)} = [
 \bm Z_1^{(0)},\ \dots,\ \bm Z_K^{(0)}]$ with
\begin{align*} 
\bm Z_k^{(0)} = \bm U_k\bm A_k + \sum_{j \neq k} \bm U_j \bm E_{k,j},\ \forall k \in [K].  
\end{align*}
Here, the columns of $\bm Z_k^{(0)}$ denote the initial token representations from the $k$-th subspace for each $k \in [K]$, the columns of $\bm A_k \in \R^{p_k \times N_k}$ consists of $\{\bm a_i\}_{i \in C_k}$, and the columns of $\bm E_{k,j} \in \R^{p_j \times N_k}$ consists of $\{\bm e_{i,j}\}_{i \in C_k}$ for each $k \in [K]$ with $N_k=|C_k|$ for each $k \in [K]$. Obviously, projecting token representations onto their corresponding subspace helps to separate the signal from the noise components using $\bm U_k^T\bm U_j = \bm 0$ for all $k \neq j$, i.e.,
\begin{align}\label{eq:proj}
\bm U_k\bm U_k^T \bm Z_{\ell}^{(0)}  = \begin{cases}
\bm U_k \bm A_k,\ & \text{if}\ \ell = k,\\
\bm U_k\bm E_{\ell,k},\ & \text{if}\ \ell \neq k. 
\end{cases}
\end{align}  
To denoise the token representations from $k$-th subspace, we compute the similarity of projected token representations via $(\bm U_k^T\bm Z)^T(\bm U_k^T\bm Z)$ and verify that the similarity between projected token representations from the $k$-th subspace is high, while the similarity between other pairs of projected token representations is low when $\delta < 1$. Then, we convert it to a distribution of membership with function $\varphi$, such as hard-thresholding or soft-max functions, and denoise the token representations towards to the corresponding subspace using this membership. 
Now, we formalize the considered operator as follows: for each $l=0,1,\dots,L-1$, 
\begin{align}
    \label{eq:MSSA_iteration}
    \bm{Z}^{(l+1)}
    &= \bm{Z}^{(l)} + \eta \operatorname{MSSA}(\bm{Z}^{(l)}), 
    % \scalebox{0.9}{\(\displaystyle\bm Z^{(l+1)} =  \bm Z^{(l)} + \eta \sum_{k=1}^K \bm U_k\bm U_k^T \bm Z^{(l)} \varphi \left(\bm Z^{(l)^T}\bm U_k\bm U_k^T\bm Z^{(l)} \right),\)} 
\end{align}
where $\eta > 0$ is the denoising step size, $\varphi(\cdot):\R^{d\times N} \to \R^{d\times N}$ is an operator applied column-wise, and 
\begin{align}\label{eq:MSSA}
    \mathrm{MSSA}(\bm{Z})= \sum_{k=1}^K \bm U_k\bm U_k^T \bm Z \varphi \left(\bm Z^{T}\bm U_k\bm U_k^T\bm Z \right). 
\end{align}
Notably, the operator in \eqref{eq:MSSA}, referred to as the {\em multi-head subspace self-attention} ({\bf MSSA}), is first proposed by \cite{yu2023white,yu2023sparse} to approximately optimize the compression term of the sparse rate reduction objective for constructing a transformer-like architecture. It is worth noting that \citet{yu2023white,pai2023masked} showed that the negative compression gradient of the objective points from the token representation to the corresponding subspace. However, they did not study the quantitative denoising efficiency of the MSSA operator (\ref{eq:MSSA}). 

\paragraph{Connections to multi-head self-attention.} Notably, the denoising operator \eqref{eq:MSSA} is essentially a special instance of multi-head self-attention ({\bf MHSA}) implemented with a skip connection in transformers. Specially, the multi-head self-attention is of the following form: 
\begin{align}\label{eq:SA}
\mathrm{MHSA}(\bm Z) = \bm W^{O}\begin{bmatrix}
    \mathrm{head}_1 \\ \vdots \\ \mathrm{head}_K
\end{bmatrix}% [\mathrm{head}_1^T,\dots,\mathrm{head}_K^T],
\end{align}
where $\bm W_k^{Q}, \bm W_k^{K}, \bm W_k^{V} $ are learnable weight matrices for queries, keys, and values for head $k$, $\bm W^{O}$ is another learnable weight matrix, and 
\begin{align*}
\mathrm{head}_k = (\bm W_k^{V})^{T} \bm Z \varphi (\bm Z^{T}\bm W_k^{Q}(\bm W_k^{K})^{T}\bm Z ).
\end{align*}
Comparing the MHSA operator with the MSSA operator in \eqref{eq:MSSA}, by setting $\bm W_k^{Q} = \bm W_k^{K} = \bm W_k^{V} = \bm U_k$ and $\bm W^{O} = [\bm U_1,\dots,\bm U_K]$ in \eqref{eq:SA}, we can obtain the MSSA operator in \eqref{eq:MSSA}. In this special case, MHSA can be interpreted as a denoising operation onto different subspaces. However, token representations in state-of-the-art large models are inherently more complex than the simplified structures assumed in \Cref{def:MoG}. In practice, these token representations are subject to a variety of factors such as noise, context dependence, and intricate dependencies that make their structure more dynamic and multifaceted. In this context, using the more flexible MHSA mechanism may provide a better way to denoise these complex token representations. To sum up, while state-of-the-art models necessitate the use of more advanced mechanisms like MHSA to effectively denoise and optimize token representations, the model in \Cref{def:MoG} offers a useful framework for understanding token representations in an idealized yet evocative setting.     

\begin{figure*}[t]
\begin{center}
        \includegraphics[width=0.75\textwidth]{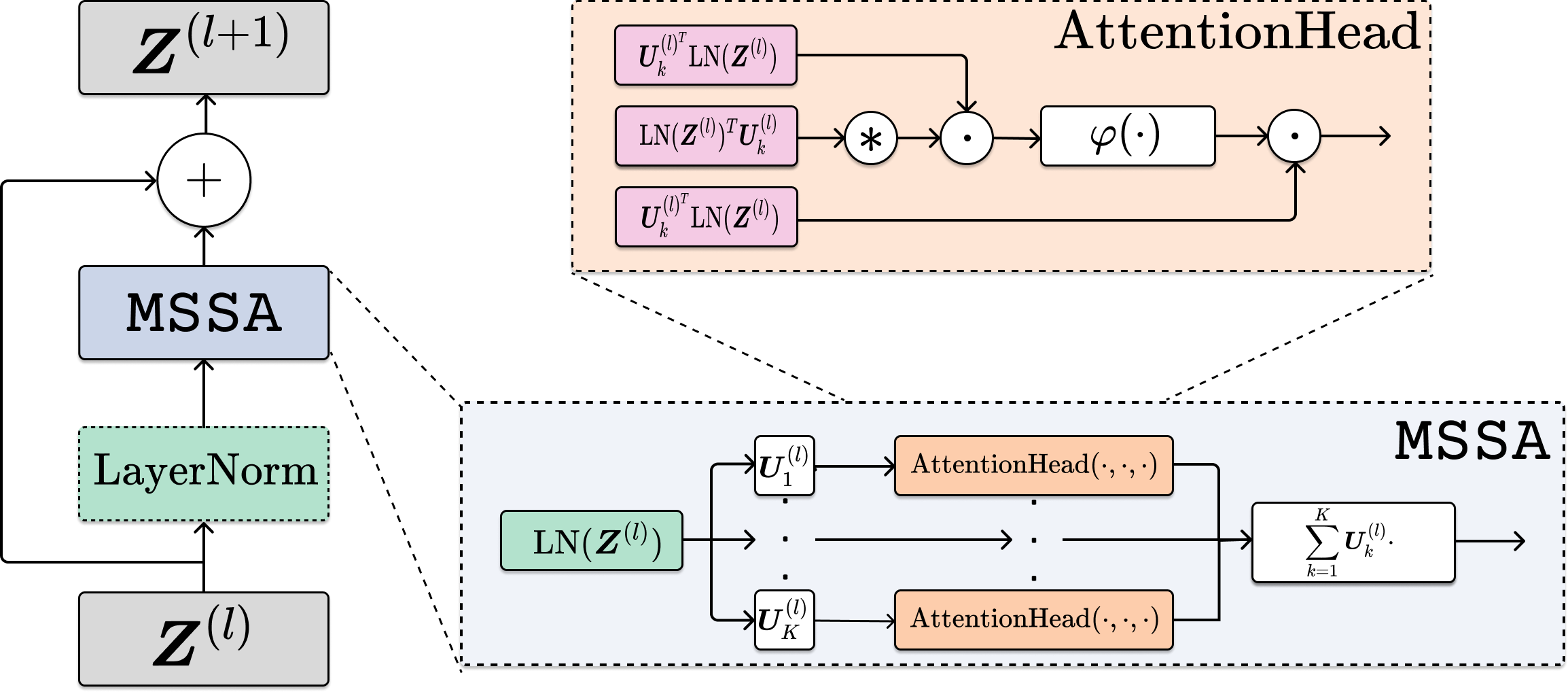}\vspace{-0.05in}
    \caption{\textbf{The attention-only transformer (AoT) architecture.} Each layer consists of the MSSA operator and a skip connection. Additionally, LayerNorm can be incorporated to enhance performance. In practice, backpropagation is applied to train the model parameters. % \qq{this figure is too large}
    %\ym{please make this figure fit into one column for saving space.}
    }\label{fig:transformer}
\end{center}\vspace{-0.15in}
\end{figure*} 

\section{Main Results}

In this section, we formally present an attention-only transformer architecture using unrolled optimization and provide a theoretical guarantee on its denoising performance.   

\subsection{Attention-Only Transformer}

Armed with the setup in \Cref{sec:setup}, we formally introduce the proposed attention-only transformer architecture. Specifically, by unrolling the iterative optimization steps \eqref{eq:MSSA_iteration} as layers of a deep network, we construct a transformer architecture in \Cref{fig:transformer}. Each layer of the proposed architecture consists only of the MSSA operator and a skip connection. To enhance the model's performance, we may additionally incorporate LayerNorm before the MSSA operator to improve performance in practice. The complete architecture is built by stacking such layers, along with essential task-specific pre-processing and post-processing steps, such as positional encoding, token embedding, and a final task-specific head to adapt to different applications. Notably, if we apply the same procedure to \eqref{eq:SA}, we obtain an attention-only transformer that only consists of the MHSA operator. 

\paragraph{Comparison to standard transformers.}
    Generally speaking, the standard decoder-only transformer architecture is composed of the following key components \citep{Brown2020,radford2019language}: (1)  positional encoding, (2) multi-head QKV self-attention mechanisms, (3) feed-forward MLP networks, (4) layer normalization, and (5) residual connections. In contrast, our proposed transformer architecture adopts a streamlined design by incorporating several key simplications. Specifically, it employs shared-QKV subspace self-attention mechanisms, excludes MLP layers, and reduces the frequency of LayerNorm.  
    
\paragraph{Differences from previous works on attention-only transformers.} In the literature, some theoretical works have studied attention-only transformers. For example, \citet{dong2021attention,wu2024role} showed that pure-attention transformers with skip connections or LayerNorm can prevent rank collapse. Additionally, \citet{alcalde2024clustering} studied the clustering behavior of attention-only hardmax transformers. While these studies contribute significantly to our understanding of the role of self-attention in transformers, they lack empirical validation and practical implications. In contrast to these works, we not only show that each layer of the proposed attention-only transformer can denoise token representations but also conduct experiments on real-world language and vision tasks to demonstrate the potential. 
 
\paragraph{The role of backward propagation.} Notably, our approach constructs a transformer architecture in the forward pass by interpreting each layer as a denoising operator, conditioned on the assumption that the subspace bases $\{\bm U_k\}_{k=1}^K$ are known. However, in practice, these subspace bases are unknown and need to be learned gradually via backpropagation. Hence, the forward denoising operator (\ref{eq:MSSA}) at the $l$-th layer becomes as follows: For each $l=0,1,\dots,L-1$, 
\begin{align*} 
    \bm Z^{(l+1)} =  \bm Z^{(l)} + \eta \sum_{k=1}^K \bm U_k^{(l)}\bm U_k^{(l)^T} \bm Z^{(l)} \varphi \left(\bm Z^{(l)^T}\bm U^{(l)}_k\bm U_k^{(l)^T}\bm Z^{(l)} \right). 
\end{align*}
Now, the parameters $\{\bm U_k^{(l)}\}$ depend on the layer index $l$ and may be different across layers. These matrices are learned through end-to-end training via backpropagation.

\begin{figure*}[t]
\begin{center}
\includegraphics[width=0.4\linewidth]{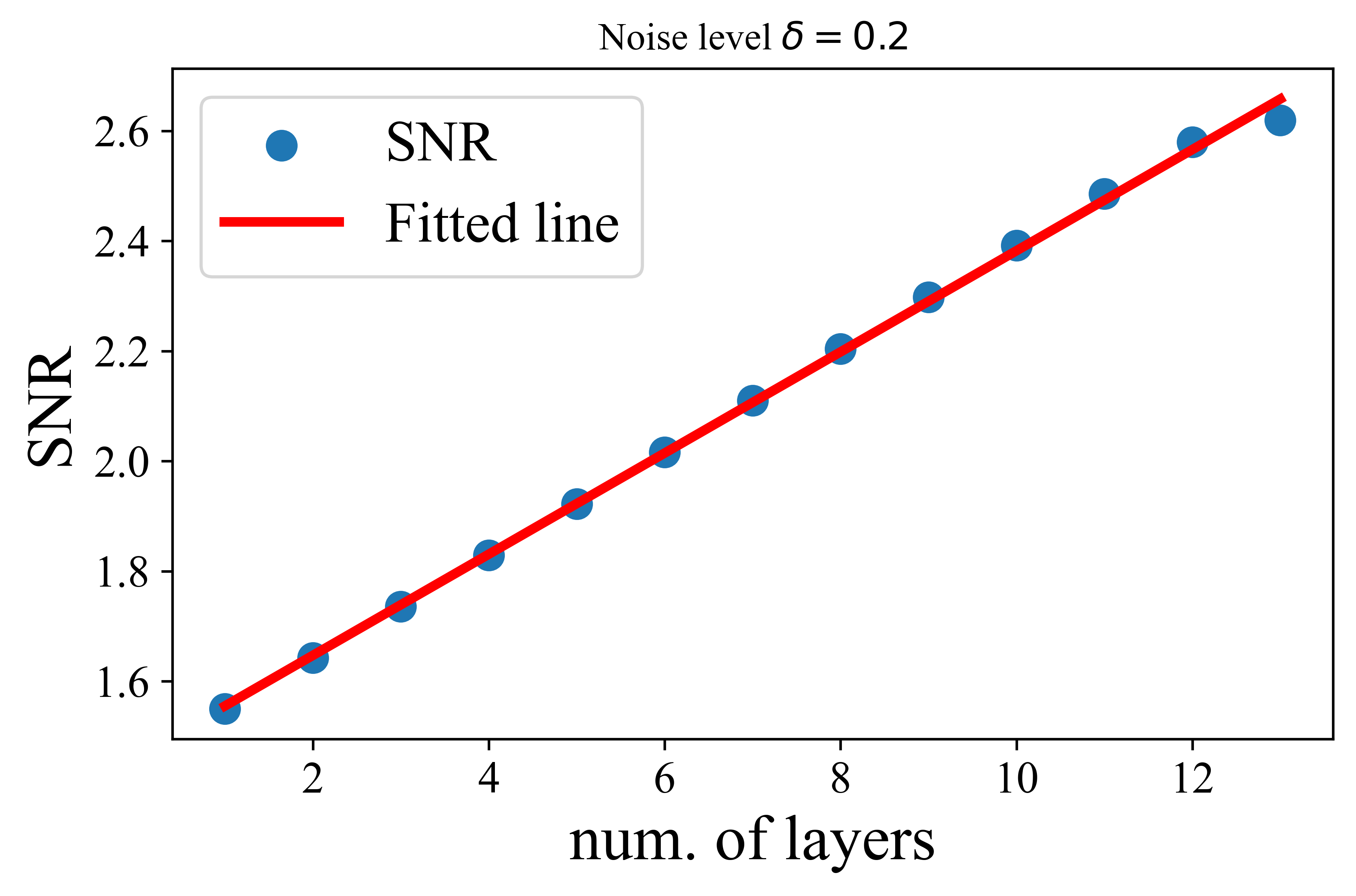}\hspace{0.5in}
\includegraphics[width = 0.4\linewidth]{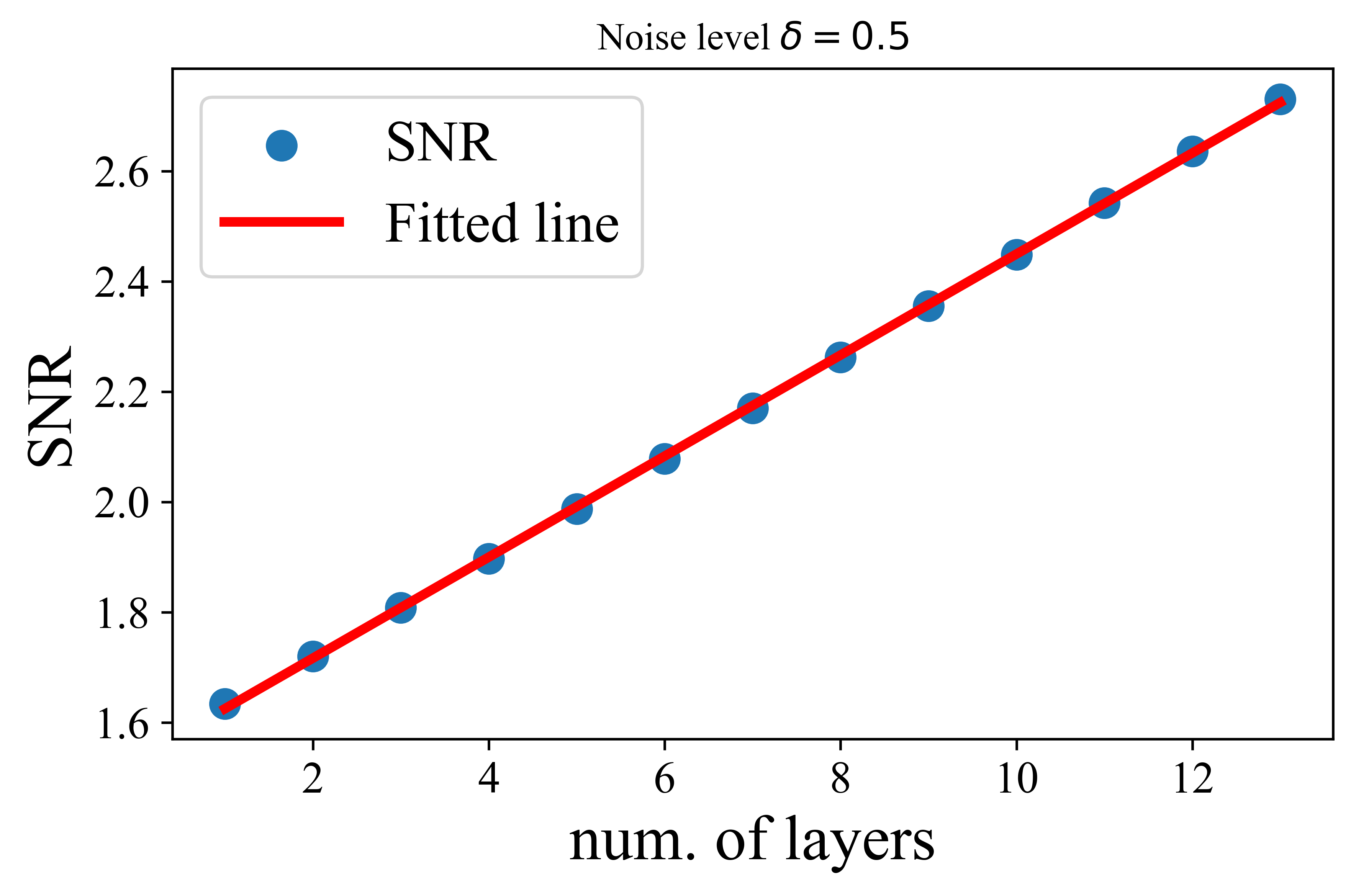}\vspace{-0.1in}
\caption{{\bf Denoising performance of the attention-only transformer.} Here, we sample initial token representations from a mixture of low-rank Gaussians in \Cref{def:MoG}. Then, we apply (\ref{eq:MSSA}) to update token representations and report the SNR at each layer. Left: noise level $\delta = 0.2$. Right: noise level $\delta = 0.5$.}  \label{fig:MSSA} 
\end{center}
\vspace{-0.15in}
\end{figure*}   

\subsection{Denoising via Attention-Only Transformer} 

In this subsection, we study the denoising performance of the proposed transformer when the initial token representations are sampled from a mixture of low-rank Gaussians as introduced in \Cref{def:MoG}. To quantify the denoising performance, we define the signal-to-noise ratio (SNR) for each cluster of the token representations at the $l$-th layer as 
\begin{align}\label{def:SNR}
\mathrm{SNR}(\bm Z_k^{(l)}) :=  \frac{\|\bm U_k\bm U_k^T\bm Z_k^{(l)} \|_F}{\|(\bm I - \bm U_k\bm U_k^T)\bm Z_k^{(l)} \|_F},\ \forall k \in [K]. 
\end{align}
To simplify our analysis, we assume that $p=p_1=\dots=p_K$, $N_1=\dots=N_K=N/K$, and 
\begin{align}\label{eq:orth}
\begin{bmatrix}
\bm U_1 & \cdots & \bm U_K
\end{bmatrix} \in \mathcal{O}^{d\times Kp}. 
\end{align}  
With the above setup, we now prove the following theorem. 
\begin{thm}\label{thm:1}
Let $\bm Z^{(0)}$ be generated according to \Cref{def:MoG} and  \(\bm{Z}^{(l)}\) be generated according to \eqref{eq:MSSA_iteration} for each $l \in [L]$. Here,  
$\varphi(\bm x) = h\left(\sigma(\bm x)\right)$, $\sigma:\R^N \to \R^N$ is the soft-max function, and $h:\R^N \to \R^N$ is an element-wise thresholding function with $h(x) = \tau \mathbb{I}\left\{x > \tau\right\}$ for each $i \in [N]$. Suppose that $p \gtrsim \log N$, $\delta \lesssim \sqrt{\log N}/\sqrt{p}$, and 
\begin{align*}
\tau \in \left( \frac{1}{2},  \frac{1}{1+N\exp(-9p/32)} \right].
\end{align*}
For sufficiently large $N$, it holds with probability at least $1-KLN^{-\Omega(1)}$ that for each $l \in [L-1]$, 
    \begin{align}\label{eq:SNR}
        \mathrm{SNR}\left(\bm Z_k^{(l+1)}\right) = (1+\eta\tau) \mathrm{SNR}\left(\bm Z_k^{(l)}\right),\  \forall k \in [K]. 
    \end{align}
\end{thm}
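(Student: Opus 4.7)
The plan is to reduce the proof to a single invariant and then verify that the invariant is preserved by one MSSA update. Specifically, I would show by induction on $l$ that the representations satisfy
\[
\bm z_i^{(l)} \;=\; (1+\eta\tau)^{l}\,\bm U_{k} \bm a_i \;+\; \sum_{j\neq k} \bm U_j \bm e_{i,j}, \qquad \forall i \in C_k,\ \forall k \in [K],
\]
so that only the in-subspace (signal) component is amplified layer by layer while the out-of-subspace (noise) components are frozen. Orthogonality of the $\{\bm U_k\}$ then gives $\|\bm U_k \bm U_k^T \bm Z_k^{(l)}\|_F = (1+\eta\tau)^{l}\|\bm A_k\|_F$ and $\|(\bm I - \bm U_k \bm U_k^T)\bm Z_k^{(l)}\|_F = \|\sum_{j\neq k}\bm U_j \bm E_{k,j}\|_F$, which depends on $l$ only through the signal factor, and substituting into \eqref{def:SNR} yields the target identity $\mathrm{SNR}(\bm Z_k^{(l+1)}) = (1+\eta\tau)\,\mathrm{SNR}(\bm Z_k^{(l)})$ for every $k$.

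The inductive step reduces to showing that, with high probability,
\[
\varphi\bigl(\bm Z^{(l)\,T} \bm U_k \bm U_k^T \bm Z^{(l)}\bigr) \;=\; \tau\, \bm \Lambda_k,
\]
where $\bm \Lambda_k$ is the diagonal $0/1$ matrix with ones exactly on $C_k$. Granted this, the MSSA formula in \eqref{eq:MSSA} becomes $\mathrm{MSSA}(\bm Z^{(l)})_{:,i} = \tau \bm U_{k(i)} \bm U_{k(i)}^T \bm z_i^{(l)}$ with $k(i)$ the cluster of $i$, and combining with \eqref{eq:MSSA_iteration} and $\bm U_j^T \bm U_k = \bm 0$ for $j \neq k$ yields $\bm U_k^T \bm z_i^{(l+1)} = (1+\eta\tau)\bm U_k^T \bm z_i^{(l)}$ for $i \in C_k$ and $\bm U_j^T \bm z_i^{(l+1)} = \bm U_j^T \bm z_i^{(l)}$ for $j \neq k$, which closes the induction step.

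To prove the attention-pattern claim, I would decompose the $(i,j)$ entry of $\bm Z^{(l)\,T} \bm U_k \bm U_k^T \bm Z^{(l)}$ into three cases using the invariant: $(1+\eta\tau)^{2l}\bm a_i^T \bm a_j$ when $i,j \in C_k$; $(1+\eta\tau)^l \bm a_i^T \bm e_{j,k}$ (or its transpose) when exactly one of $i,j$ lies in $C_k$; and $\bm e_{i,k}^T \bm e_{j,k}$ when neither does. Hanson--Wright and standard Gaussian tail bounds give, with probability $1 - N^{-\Omega(1)}$, $\|\bm a_i\|^2 = p(1\pm o(1))$, $|\bm a_i^T\bm a_j|\lesssim \sqrt{p\log N}$, $|\bm a_i^T \bm e_{j,k}|\lesssim \delta\sqrt{p\log N}$, and $|\bm e_{i,k}^T \bm e_{j,k}|\lesssim \delta^2\sqrt{p\log N}$. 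For a column indexed by $i \in C_k$, the diagonal entry $(1+\eta\tau)^{2l}\|\bm a_i\|^2$ exceeds every other entry in that column by at least $9p/32$ under $p \gtrsim \log N$, so the softmax obeys $\sigma_{ii} \geq 1/(1+N\exp(-9p/32)) \geq \tau$ (this is precisely where the upper bound on $\tau$ enters), and the thresholding $h(\cdot) = \tau\,\mathbb{I}\{\cdot > \tau\}$ keeps only this entry at value $\tau$. For a column $i \notin C_k$, the required bound $\max_j \sigma_{ij} < \tau$ I would obtain by a max-versus-sum comparison for the iid log-normals $\{\exp((1+\eta\tau)^l \bm e_{i,k}^T \bm a_j)\}_{j \in C_k}$ conditional on $\bm e_{i,k}$, giving a bound roughly of the form $\exp\bigl(-\tfrac{1}{2}((1+\eta\tau)^l\|\bm e_{i,k}\|-\sqrt{2\log N_k})^2\bigr)$ that stays strictly below $\tau > 1/2$ under $\delta \lesssim \sqrt{\log N}/\sqrt{p}$.

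The main obstacle is exactly this last case: at $l=0$ the cross-cluster similarities $\bm e_{i,k}^T \bm a_j$ are tiny, but the factor $(1+\eta\tau)^l$ amplifies them with every layer, and I must prove a uniform-in-$l$ bound showing that the softmax for wrong-cluster columns never places more than $\tau$ mass on any single entry. The regime $p \gtrsim \log N$ together with the variance constraint $\delta \lesssim \sqrt{\log N}/\sqrt{p}$ controls the log-normal family tightly enough to rule this out, but the argument requires careful concentration estimates. Everything else---the in-cluster softmax concentration, the algebraic update calculation, and propagating the invariant from layer $l$ to $l+1$---is then bookkeeping, and a union bound over $k \in [K]$ and $l \in [L-1]$ delivers the stated $1 - KLN^{-\Omega(1)}$ probability.
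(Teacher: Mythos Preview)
Your overall plan coincides with the paper's: induct on $l$ via the invariant $\bm z_i^{(l)} = (1+\eta\tau)^{l}\bm U_{k}\bm a_i + \sum_{j\neq k}\bm U_j\bm e_{i,j}$, reduce the inductive step to the attention-pattern identity $\varphi\bigl(\bm Z^{(l)T}\bm U_k\bm U_k^T\bm Z^{(l)}\bigr) = \tau\bm\Lambda_k$, and verify that identity column by column with Gaussian concentration. Your handling of the in-cluster columns---diagonal dominance by a gap of order $p$, yielding a diagonal softmax of at least $1/(1+N\exp(-9p/32))\geq\tau$---is exactly the argument the paper gives.

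The step that does not close is the wrong-cluster column bound, precisely the one you flag as the main obstacle. Conditionally on $\bm e_{i,k}$, the relevant entries are i.i.d.\ $N(0,\sigma_l^2)$ with $\sigma_l=(1+\eta\tau)^l\|\bm e_{i,k}\|$, and your heuristic $\exp\bigl(-\tfrac{1}{2}(\sigma_l-\sqrt{2\log N_k})^2\bigr)$ comes from replacing $\sum_{j\in C_k}\exp(Y_j)$ by its mean $N_k\exp(\sigma_l^2/2)$. That replacement is only valid while $\sigma_l\lesssim\sqrt{\log N_k}$; once $(1+\eta\tau)^l\delta\sqrt{p}$ exceeds a constant multiple of $\sqrt{\log N}$ the log-normal sum is dominated by its maximum term, the softmax concentrates on $j^*=\argmax_{j\in C_k}\langle\bm a_j,\bm e_{i,k}\rangle$, and $\max_j\sigma_{ij}\to 1$ rather than the vanishing value your formula predicts---so the identity $\varphi(\bm M_k^{(l)})=\tau\bm\Lambda_k$ would fail and the invariant would break. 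The paper attacks this step differently: it first lower-bounds $\max_{j\in C_k}\langle\bm a_j,\bm e_{i,k}\rangle$ to force the diagonal softmax entry below $1/2$, and then caps every remaining entry at $1/2$ by citing a precomputed ratio inequality for the \emph{unscaled} inner products; that inequality is applied to the $(1+\eta\tau)^l$-scaled Gram matrix without adjustment, so the uniform-in-$l$ difficulty you identify is handled only implicitly there as well.
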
  

The proof is deferred to \Cref{app:thm 1}. Here we comment on the significance of this theorem: 
\vspace{-0.15in}
\begin{itemize}[leftmargin=*] 
    \item {\bf Linear denoising performance of the attention-only transformer.}  When the initial token representations are sampled from a mixture of low-rank Gaussian distributions with a noise level $O(\sqrt{\log N}/\sqrt{p})$, we show that each layer of the proposed transformer denoises token representations at a linear rate. This indicates the MSSA operator's efficiency in reducing noise across layers. Notably, our theoretical results are well-supported by experimental observations in \Cref{fig:MSSA}, which further validate the practical denoising capability of the proposed transformer. 

    \item {\bf Difficulties in  analyzing the dynamics of the update (\ref{eq:MSSA_iteration}).} Note that the update (\ref{eq:MSSA_iteration}) is highly nonlinear and complicated. % Specifically, it is cubic in terms of update variables $\bm Z^{(l)}$ and the operator $\varphi$ is nonlinear, being composed of soft-max and thresholding functions. 
    These characteristics lead to intricate interactions among consecutive updates that complicate the analysis of the learning dynamics. Compared to the existing works \citep{ahn2023transformers,schlag2021linear,zhang2023trained} that mainly focus on linear self-attention with $\varphi(\cdot)$ being the identify function, our analysis provides more pertinent results for understanding the denoising performance and learning dynamics of attention mechanisms, capturing the {\em nonlinear} interactions and transformations across the layers of modern transformer architectures. 

\end{itemize}

\section{Experimental Results} \label{sec:experimennts}
 
In this section, we evaluate our proposed {\em attention-only transformer} (AoT) architecture using the MSSA (denoted by {\bf AoT-MSSA}) and MHSA (denoted by {\bf AoT-MHSA}) operators on both vision and language tasks. Since the model configurations on vision and language tasks are different, we use {\bf AoT-MSSA-V} and {\bf AoT-MHSA-V} to denote the models applied to vision tasks, and {\bf AoT-MSSA-L} and {\bf AoT-MHSA-L} for those applied to language tasks. Due to limited computing resources, the goal of our experiments is not to outperform state-of-the-art transformers but to verify that AoT can achieve comparable performance on both language and vision tasks. In our implementations, we set the operator $\varphi(\cdot)$ in Eq. \eqref{eq:MSSA} to be the softmax function.    

\subsection{Vision Transformers for Image Classification} 

In this subsection, we evaluate the performance of AoT as a backbone architecture for supervised image classification on ImageNet and compare it against several state-of-the-art models. To construct the AoT-based model, we adopt the same preprocessing pipeline and classification head as defined in \citep[Section 4.1.1]{yu2023sparse}.

\begin{table}[!htbp]
\caption{Top-1 accuracy on ImageNet: Evaluation of  AoT-MSSA-V and comparison to CRATE.}\vskip 0.1in
\centering 
\small\setlength{\tabcolsep}{12pt}
\begin{tabular}{lcccc}
\toprule
 Models & Accuracy  & \# of Parameters \\ 
\midrule
 \midrule
AoT-MSSA-V   & 71.7\% & 22M \\
CRATE & 79.5\%  & 39M  \\
% \midrule
% % AoT-MSSA-Large  & 75.7\% & 52M & 24~(\texttt{Atten}) \\
% %  \midrule
% % AoT-MSA-Large  &  &  &   \\
% % \midrule
% AoT-MSSA-V Large  & 79.2\% & 86M \\
% \midrule
%  CRATE {Large}  & 83.9\%  & 253.8M  \\
% \midrule
% AoT-MHSA-V & 69.5\% & 15M \\
% ViT & 72.4 \% & 22M \\ 
 \bottomrule
\end{tabular}
\label{tab:vision}
\end{table} 

\paragraph{Comparison between MSSA and CRATE.} 
We consider the AoT-MSSA-V model and compare it against the CRATE model in \citet{yu2023sparse}. We employ Lion optimizer \citep{chen2024symbolic} to pre-train the AoT-MSSA-V transformer on ImageNet-21K for 90 epochs and to fine-tune it on ImageNet-1K \citep{deng2009imagenet} for 50 epochs by minimizing the cross-entropy (CE) loss. We use different hyperparameters for pre-training and fine-tuning. During pre-training, we use a learning rate of $2 \times 10^{-4}$, weight decay of $0.7$, label smoothing with a parameter of $0.2$, and a batch size of $4096$. For fine-tuning, the corresponding values are $5 \times 10^{-4}$, $0.3$, $0.1$, and $2048$, respectively. Standard data augmentation techniques, including random cropping, random horizontal flipping, and random augmentation, are used in our implementation, following the same setup as in~\cite{yu2023white}.

\begin{table}[!htbp]
\caption{Top-1 accuracy on ImageNet: Evaluation of  AoT-MHSA-V and comparison to ViT.%\pw{@Yifu Please update this table!}
}\vskip 0.1in
\centering 
\small\setlength{\tabcolsep}{12pt}
\begin{tabular}{lcccc}
\toprule
 Models & Accuracy  & \# of Parameters \\ 
\midrule
 \midrule
AoT-MHSA-V & 69.5\% & 15M \\
ViT & 72.4 \% & 22M \\ 
 \bottomrule
\end{tabular}
\label{tab1:vision}
\end{table} 

\paragraph{Comparison between MHSA and ViT.} We next train AoT-MHSA-V from scratch on ImageNet-1K for 150 epochs and compare its performance with ViT \cite{dosovitskiy2020image}. The training setup follows the same configuration as above: we use the Lion optimizer with a learning rate of $5 \times 10^{-4}$, a weight decay of $0.1$, label smoothing with a smoothing parameter of $0.1$, a batch size of $2048$, and identical data augmentation strategies. 
 
Based on the above experimental setup, we report the top-1 accuracy of AoT-MSSA-V and CRATE in \Cref{tab:vision}, and that of AoT-MHSA-V and ViT in \Cref{tab1:vision}. Due to the absence of MLP layers in AoT, AoT-based models achieve slightly worse performance comparable to CRATE and ViT while using only nearly half the number of parameters. This result shows the effectiveness of the attention-only architecture. We provide visualization for the self-attention heatmaps of AoT-MSSA-V trained on ImageNet-1K in \Cref{fig:emergence} in Appendix \ref{app:semantic}. We observe that each head captures similar semantic meanings across different images, demonstrating the interpretability of our proposed architecture in practice.  
 
\subsection{Decoder-Only Transformers for Language Tasks} 

To study the performance of our architecture on language tasks, we consider the widely used Generative Pre-Training (GPT) task~\citep{radford2019language}. In the context of causal language modeling, the goal is to predit the next token in a sequence based on the preceding context. To adapt to this task, we modify the AoT architecture by changing the MSSA (resp., MHSA) operator to a causally masked MSSA (resp., MHSA) operator. We follow the same pre-processing and post-processing steps in \citep[Section 4.1.4]{yu2023sparse}. Our implementation of the GPT-2 type transformers and training pipeline is based on the framework outlined in~\cite{Karpathy2022}.\footnote{\url{https://github.com/karpathy/nanoGPT.git}} 
 
\begin{figure*}[t]
\begin{center}
\includegraphics[width=0.4\linewidth]{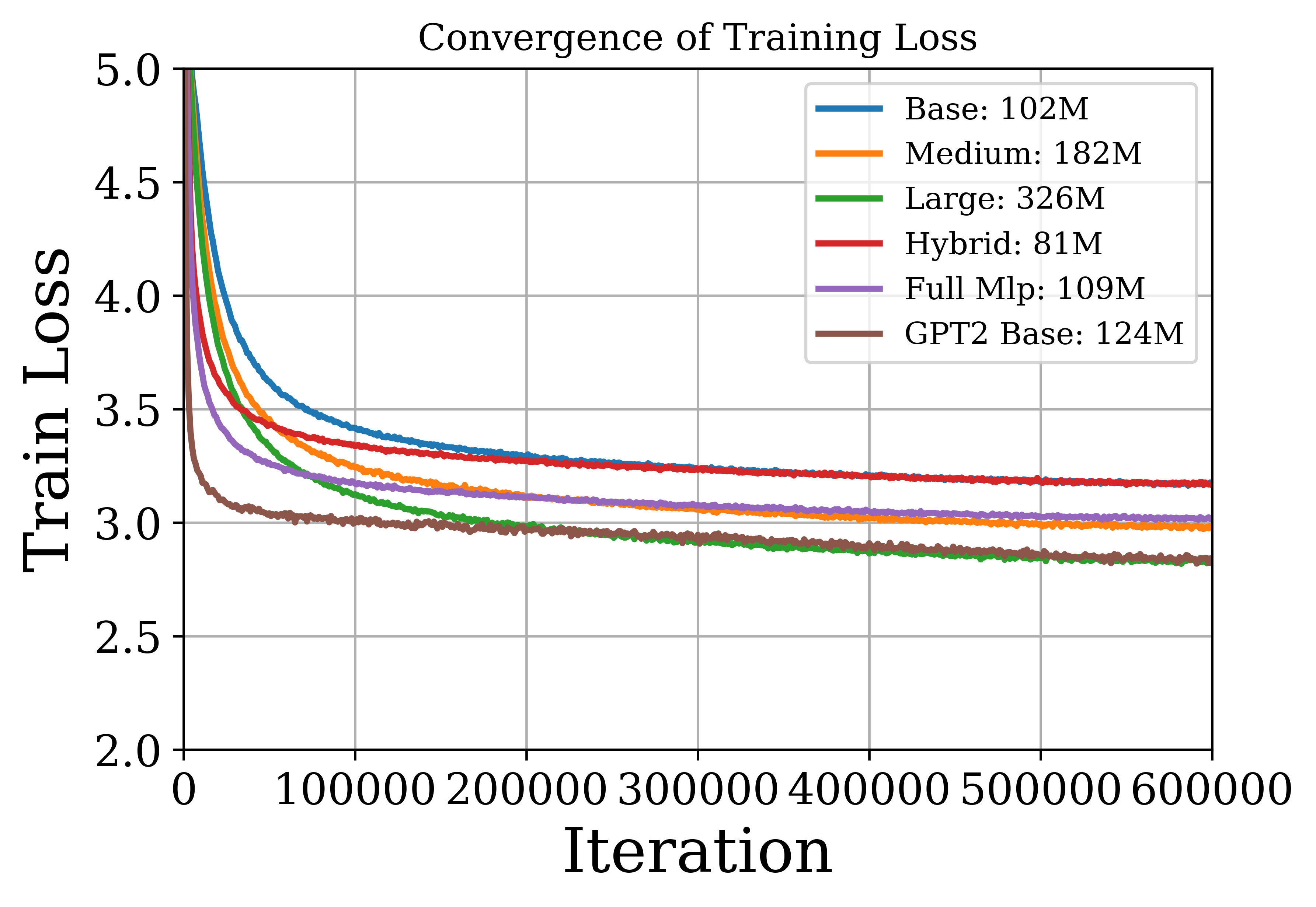} \hspace{0.4in}
\includegraphics[width = 0.4\linewidth]{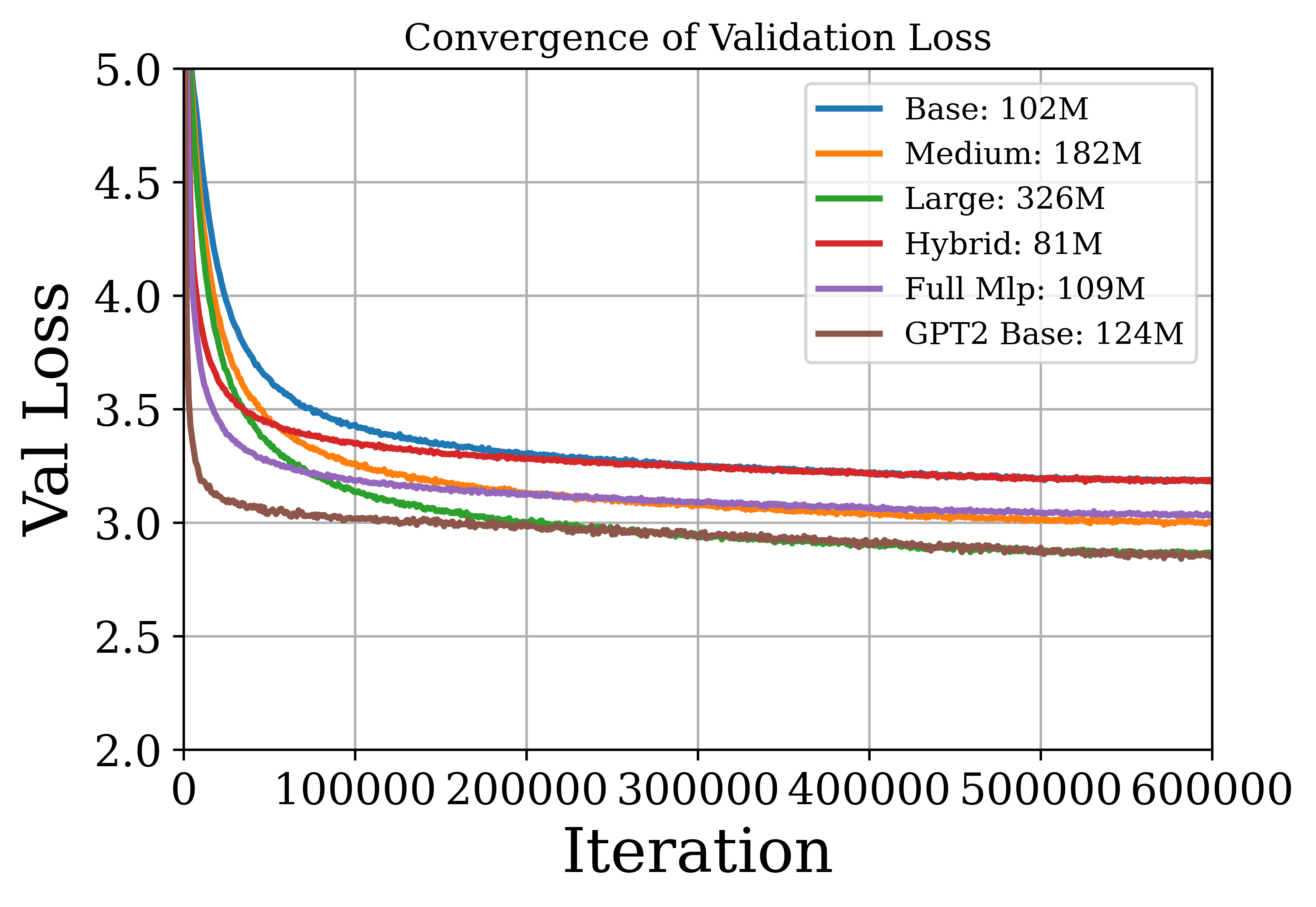}
    \vspace{-0.15in}
\caption{\centering \textbf{Evaluating models on language tasks.} We plot the training loss (left) and validation loss (right) of the AoT and GPT-2 models pretrained on OpenWebText.}  \label{fig:loss} 
\end{center}
\vspace{-0.15in}
\end{figure*}   
 
\subsubsection{Language Modeling}  
 
\paragraph{Pre-training language models.}  We pre-train the AoT-MSSA-L and AoT-MHSA-L models of different sizes, along with GPT-2 (see \Cref{tab:zeroshot} for model sizes), on OpenWebText~\citep{Gokaslan2019OpenWeb}. We defer the details of the model architectures to \Cref{tab:lang_conf}. Here, we train these models over a 1024-token context using the AdamW optimizer~\citep{loshchilov2019decoupledweightdecayregularization}. We plot the training loss and validation loss against the number of training iterations in \Cref{fig:loss}(a) and (b), respectively. We observe that medium- and large-sized AoT-based models achieve training and validation losses comparable to those of the GPT-2 base model. In addition, compared to the GPT-2 base model, the AoT-MHSA-L model is identical to the GPT-2 base model, except for the absence of MLP layers in the architecture. As shown in \Cref{fig:loss}, incorporating MLP layers can accelerate the training process.  

\paragraph{Zero-shot evaluation.} Using the above pre-trained models, we compute the cross-entropy validation loss without training on datasets WikiText~\citep{merity2016pointer}\footnote{For WikiText2 and WikiText103~\citep{merity2016pointer}, the test splits are the same, so we merge them as a single dataset referred to as WikiText.}, LAMBADA~\citep{paperno2016lambadadatasetwordprediction}\footnote{To obtain the accuracy on LAMBADA dataset, we use greedy decoding.}, and PTB~\citep{marcus-etal-1993-building} in \Cref{tab:zeroshot}. In addition, we report zero-shot accuracy in \Cref{tab:zeroshot} on LAMBADA for predicting the final word of sentences, as well as on the Children’s Book Test (CBT)~\citep{hill2016goldilocksprinciplereadingchildrens}, where the task is to choose either common nouns (CN) or named entities (NE) from 10 possible options for an omitted word in a paragraph. We observe that the AoT models with medium and large parameter sizes can achieve comparable performance to the GPT-2 base model. Moreover, we found that adding MLP layers to AoT does not improve the zero-shot performance. These results highlight the potential of attention-only models to achieve competitive results while maintaining interpretability.

\begin{table*}[t]
\caption{Zero-shot results on several language benchmark datasets and tasks: Evaluation of different sizes of AoT with the MSSA and MHSA operators and comparison to the GPT2 model.}\vskip 0.1in
\centering
\begin{footnotesize}
\begin{tabular}{l|cccccc}
% \hline
\toprule
 Models  & {\bf LAMBADA} & {\bf PTB} & {\bf WikiText} & {\bf LAMBADA} & {\bf CBT CN} & {\bf CBT NE} \\
 \# of parameters  & (val loss) $\downarrow$ &  (val loss) $\downarrow$ &(val loss) $\downarrow$ & (acc) $\uparrow$ &(acc) $\uparrow$ &(acc) $\uparrow$ \\
 % \hline
 \midrule
 AoT-MSSA-L Base (102M) & 4.70 & 6.03 & 4.65 & 0.25 & 0.80 & 0.74\\
 AoT-MSSA-L Medium (182M) & 4.47 & 5.08 & 4.22 & 0.29 & 0.84 & 0.77 \\
 AoT-MHSA-L Base (122M) & 4.42 & 5.52 & 4.19 & 0.38 & 0.86 & 0.82\\
 % Large 326M & {\bf 4.26} & {\bf 4.77} & {\bf 3.99} & {\bf 0.34} & {\bf 0.86} & {\bf 0.81}\\
 % Hybrid 81M & 4.84 & 5.83 & 4.56 & 0.25 & 0.79 & 0.73\\
 % Full MLP 109M & 4.73 & 6.95 & 4.70 & 0.30 & 0.83 & 0.77\\
 GPT-2 Base (124M) & 4.32 & 5.75 & 4.13 &  0.40 &  0.87 &  0.84 \\
\bottomrule
\end{tabular}
\label{tab:zeroshot}
\end{footnotesize}
\end{table*} 
\vspace{-0.05in} 

\subsubsection{In-Context Learning}\label{sec:in-context} 

In-context learning (ICL) refers to the ability of modern language models to perform tasks by using examples provided in the input prompt, along with a new query input, generating outputs without updating the parameters~\citep{Brown2020,garg2022can,park2024mambalearnlearncomparative}. We evaluate the ICL capabilities of our AoT models and compare their performance with that of GPT-2 \citep{radford2019language}. Each model is trained from scratch on specific tasks, including linear and sparse linear regressions.  We mainly follow the setup in ~\cite{garg2022can} to train models to learn linear functions in context. Specifically, for a  specific function class $\mathcal{G}$, we generate random prompts by sampling a function $g\in\mathcal{G}$ from distribution $\mathcal{D}_{\cal G}$ over functions random inputs $\bm{x}_1 ,\ldots, \bm{x}_N \in\mathbb{R}^d$ i.i.d. from $\mathcal{D}_{\mathcal{X}}$ over inputs. To evaluate the inputs on $g$, we create a prompt $P=(\bm x_1, g(\bm x_1), \ldots, \bm x_N, g(\bm x_N))$.
We train the model $f_{\bm \theta}(\cdot)$ to minimize the expected loss over all prompts prefixes: 
\begin{equation}
\label{eq:icl_objective}
    \min_{\bm \theta} \mathbb{E}_{P}\left[\frac{1}{N}\sum_{i=1}^{N-1}\left(f_{\bm \theta}(P^i) - g(\bm{x}_i ) \right)^2\right], 
\end{equation}
where $P^i$ is the prompt prefix up to the input $i$-th in-context example $P=(\bm{x}_1, g(\bm{ x}_1), \ldots, \bm{x}_i)$.   

\paragraph{Tasks.} We consider both linear functions and sparse linear functions with dimension $d=20$. The in-context examples $\bm{x}_i$ are sampled from the isotropic Gaussian distribution. For linear functions, we define $\mathcal{G}=\{g:g(\bm x)=\bm w^T \bm x\}$, where $\bm x$ is sampled from the isotropic Gaussian distribution as well. For sparse linear functions, the setup is similar, but with a modification: only 3 coordinates in the vector $\bm w$ are set as non-zero, while the remaining ones are set to be zero.  

\paragraph{Training and evaluation.} For all experiments, we set the number of heads to 8 and the embedding size to 128. We present the model configurations in \Cref{tab:icl_conf} in Appendix \ref{app:details}. To train the model, we sample a batch of random prompts with size 64 and train the models for 50,000 iterations using Adam optimizer~\citep{kingma2017adammethodstochasticoptimization}. We evaluate models using same $\mathcal{D}_{\mathcal{G}}$ and $\mathcal{D}_{\mathcal{X}}$ to sample 1280 prompts. We refer the reader to~\cite{park2024mambalearnlearncomparative} for more details.  
We plot the estimation error against the in-context samples in~\Cref{fig:icl}. We observe that our AoT architecture can in-context learn linear functions and sparse linear functions, achieving performance close to that of the GPT-2 transformer.  

\begin{figure*}[t]
\begin{center}
\includegraphics[width=0.4\linewidth]{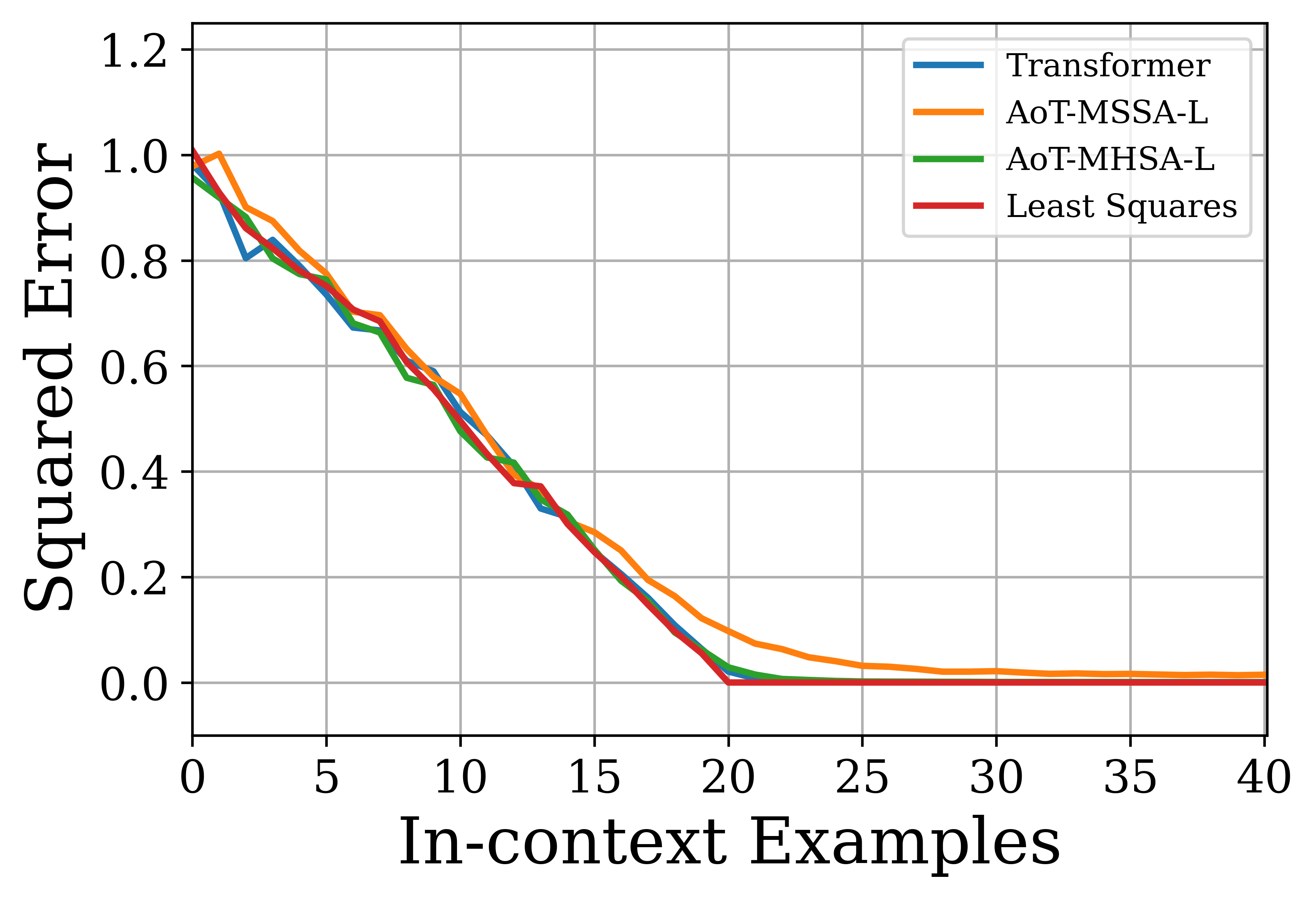}\hspace{0.4in}
\includegraphics[width = 0.4\linewidth]{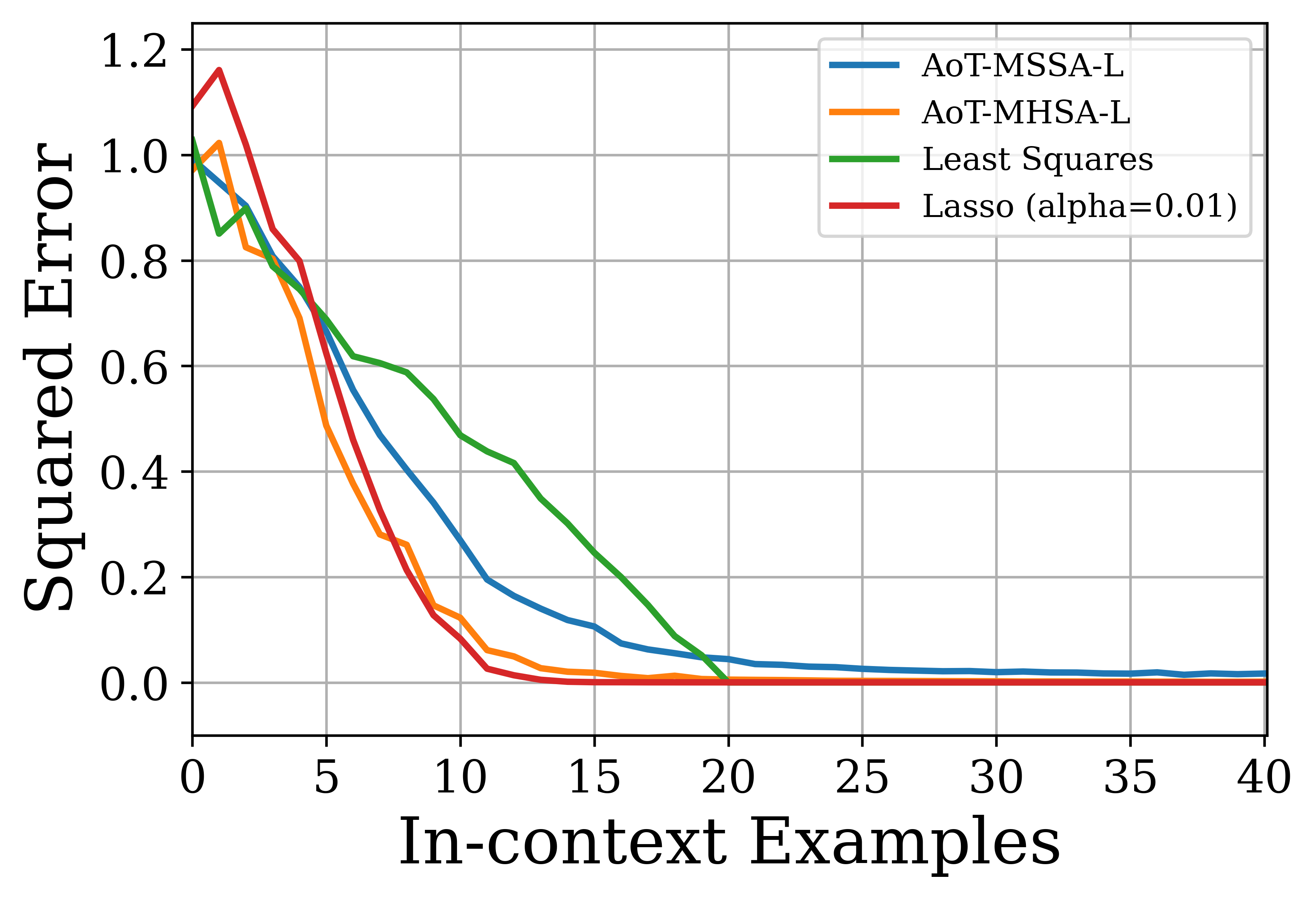}\vspace{-0.1in}
\caption{{\bf Evaluating models on in-context learning tasks.} We plot the normalized squared error as a function of the number of in-context examples for linear regression (left) and sparse linear regression (right) tasks.  \label{fig:icl}} % \pw{Please update this figure according to our discussion.}}   
\end{center}
\vspace{-0.1in}
\end{figure*}  

\section{Conclusion}\label{sec:conc}

In this work, we proposed a new and minimalistic transformer architecture by interpreting each layer as a subspace denoising operator to token representations, where these representations are assumed to be sampled from a mixture of low-rank Gaussians. Remarkably, this simple architecture consists of multi-head (subspace) self-attention and skip connections at each layer, without MLP layers at all. We have rigorously proven that each such layer improves the signal-to-noise ratio of token representations at a linear rate with respect to the number of layers. Extensive experiments on both language and vision tasks demonstrate that this simplified architecture achieves performance comparable to that of standard transformers. Our theoretical and empirical findings suggest that subspace denoising via attention heads is the core mechanism underlying transformer effectiveness, with MLP layers contributing only marginal performance gains. We believe this work lays a foundation for future exploration of more efficient and principled architectural designs.

\section*{Acknowledgment}
Peng Wang and Qing Qu would like to acknowledge support from the NSF grant \#2402950. Druv Pai would like to acknowledge support from the UC Berkeley College of Engineering Fellowship. Yi Ma would like to acknowledge support from the joint Simons Foundation-NSF DMS grant \#2031899, the ONR grant N00014-22-1-2102, the NSF grant \#2402951, and also support from and the HKU startup, the Hong Kong Center for Construction Robotics Limited (HKCRC) Award 052245, and JC Club of Hong Kong.  
 
\bibliographystyle{abbrvnat}
\bibliography{reference,large_models}

\clearpage
\appendix

% \edit{In the appendix, the organization is as follows.}  
To simplify our development, we introduce some further notation. We use $\mathrm{BlkDiag}(\bm X_1,\dots,\bm X_K)$ to denote a block diagonal matrix whose diagonal blocks are $\bm X_1,\dots,\bm X_K$.

\section{Proof of \Cref{thm:1}}\label{app:thm 1}

\subsection{Preliminary Results}

To prove \Cref{thm:1}, we first establish several probabilistic results about Gaussian random vectors. First, we present a probabilistic bound on the deviation of the norm of Gaussian random vectors from its mean. This is an extension of \cite[Theorem 3.1.1]{vershynin2018high}.  

\begin{lemma}\label{lem:norm gau}
Let $\bm x \sim \mathcal{N}(\bm 0, \delta^2\bm I_d)$ be a Gaussian random vector. It holds with probability at least $1 - 2\exp\left(-t^2/2\delta^2 \right)$ that 
\begin{align}\label{eq:norm x}
\left|\|\bm x \| - \delta\sqrt{d}\right| \le t + 2\delta. 
\end{align}
\end{lemma}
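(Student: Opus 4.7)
The natural approach is to split the quantity of interest via the triangle inequality into a concentration piece and a bias piece:
\begin{align*}
\bigl|\|\bm{x}\| - \delta\sqrt{d}\bigr| \;\le\; \bigl|\|\bm{x}\| - \mathbb{E}\|\bm{x}\|\bigr| + \bigl|\mathbb{E}\|\bm{x}\| - \delta\sqrt{d}\bigr|,
\end{align*}
and to control each term separately. The first term is a deviation of a 1-Lipschitz function of a Gaussian, and the second is a deterministic bias that we expect to be $O(\delta)$.

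For the concentration term, I would observe that the map $\bm{x} \mapsto \|\bm{x}\|$ is 1-Lipschitz with respect to the Euclidean norm. Applying the Gaussian concentration inequality (Tsirelson--Ibragimov--Sudakov) to $\bm{x} \sim \mathcal{N}(\bm{0}, \delta^2 \bm{I}_d)$ immediately yields
\begin{align*}
\mathbb{P}\bigl(|\|\bm{x}\| - \mathbb{E}\|\bm{x}\|| > t\bigr) \;\le\; 2 \exp\!\bigl(-t^2/(2\delta^2)\bigr),
\end{align*}
which reproduces the tail probability stated in the lemma. Equivalently, one can rescale by setting $\bm{y} = \bm{x}/\delta \sim \mathcal{N}(\bm{0}, \bm{I}_d)$ and apply the standard Gaussian concentration bound to $\|\bm{y}\|$.

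For the bias term, I would bound $\mathbb{E}\|\bm{x}\|$ on both sides. Jensen's inequality gives $\mathbb{E}\|\bm{x}\| \le \sqrt{\mathbb{E}\|\bm{x}\|^2} = \delta\sqrt{d}$. For the matching lower bound, the Gaussian Poincar\'e inequality applied to the 1-Lipschitz function $\|\cdot\|$ yields $\mathrm{Var}(\|\bm{x}\|) \le \delta^2$, so
\begin{align*}
(\mathbb{E}\|\bm{x}\|)^2 \;=\; \mathbb{E}\|\bm{x}\|^2 - \mathrm{Var}(\|\bm{x}\|) \;\ge\; (d-1)\delta^2,
\end{align*}
and hence $\mathbb{E}\|\bm{x}\| \ge \delta\sqrt{d-1} \ge \delta\sqrt{d} - \delta$. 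Thus $|\mathbb{E}\|\bm{x}\| - \delta\sqrt{d}| \le \delta$. Plugging into the triangle inequality gives $|\|\bm{x}\| - \delta\sqrt{d}| \le t + \delta$ on the high-probability event, which comfortably implies the claimed $t + 2\delta$.

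There is essentially no difficult step; the only subtlety is correctly tracking the variance scale $\delta$ through both the Lipschitz concentration estimate and the Poincar\'e estimate (these degrade by the same factor, so the scaling is consistent). The slack between the tight bound $t+\delta$ and the stated $t+2\delta$ is harmless and presumably reflects a convenient rounding by the authors.
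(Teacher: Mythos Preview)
Your proof is correct. The paper does not actually supply a proof of this lemma; it simply states the result and remarks that it is ``an extension of \cite[Theorem 3.1.1]{vershynin2018high}.'' Your argument via Gaussian Lipschitz concentration for the deviation term and the Gaussian Poincar\'e inequality for the bias term is a clean, self-contained route that recovers exactly the stated tail probability $2\exp(-t^2/(2\delta^2))$; in fact you obtain the slightly sharper slack $t+\delta$ rather than $t+2\delta$. By contrast, the referenced Vershynin theorem is stated for general sub-gaussian vectors and is proved via a Bernstein-type bound on $\|\bm{x}\|^2 - d$ followed by an algebraic passage to $\|\bm{x}\| - \sqrt{d}$, which yields only an unspecified absolute constant in the exponent. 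So your approach is both more elementary (it exploits Gaussianity directly) and more precise for the statement as written; the cited result has the advantage of applying beyond the Gaussian case, which is not needed here.
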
 

Based on the above lemma, we can respectively estimate the norm of coefficients in the signal and noise parts, the products between different pairs of Gaussian random vectors, and the bounds on the soft-max values of these products.   

\begin{lemma}\label{lem:norm ae}
    Consider the setting in \Cref{def:MoG} with $p=p_1=\dots=p_K$ and $N_1=\dots=N_K=N/K$. Suppose that $p \ge 16(\sqrt{\log N} + 1)^2$ and
 	\begin{align}\label{eq:N1}
 	 N \ge 8\pi K^2\log^3 N,\ \delta \le \frac{1}{8}\sqrt{\frac{\log N}{p}}. 
	\end{align} 	   
     The following statements hold:\\
    (i) With probability at least $1 - 2KN^{-1}$, we have 
    \begin{align}
         & \left|\|\bm a_i\| - \sqrt{p}\right| \le 2 \left(\sqrt{\log N} + 1\right), \forall i \in [N], \label{eq:norm a}\\
         & \left|\|\bm e_{i,l}\| - \delta\sqrt{p}\right| \le 2\delta\left(\sqrt{\log N}+1\right), \forall i \in C_k, l \neq k \in [K].\label{eq:norm e}
    \end{align}
    (ii)  With probability at least $1 - 4KN^{-2}$, we have  
    \begin{align}
        & \left| \langle \bm a_i, \bm a_j \rangle \right| \le 3\sqrt{\log N}\|\bm a_i\|, \forall i \neq j \in C_k, k \in [K],   \label{eq:norm aij} \\
        & \left| \langle \bm a_i, \bm e_{j,l} \rangle \right| \le 3\sqrt{\log N}\|\bm e_{j,l}\|, \forall i \in C_k, j \in C_l, k \neq l \in [K],  \label{eq:norm ae} \\
        & \left| \langle \bm e_{i,k}, \bm e_{j,k} \rangle \right| \le 3\delta \sqrt{\log N}\|\bm e_{j,k}\|, \forall i \in C_l, j \in C_m, l, m \neq k.  \label{eq:norm eij} 
    \end{align}
    (iii) With probability at least $1-2N^{-1}$, we have 
    \begin{align}
        & \max_{i \in C_k} \langle \bm a_i, \bm e_{j,k}\rangle \ge  \sqrt{\log N}\|\bm e_{j,k}\|, \forall j \in C_l, l \neq k \in [K]. \label{eq:norm ae1}
    \end{align}
    (iv) With probability at least $1-4K N^{-1}$, we have
    \begin{align}
        & \frac{\exp\left(\langle \bm a_i, \bm e_{j,k} \rangle \right)}{\sum_{i^\prime \in C_k}\exp\left( \langle \bm a_{i^\prime}, \bm e_{j,k} \rangle\right) } \le \frac{1}{2}, \forall i \in C_k, j \in C_l, k \neq l \in [K], \label{eq:exp ae}\\
        & \frac{\exp\left(\langle \bm e_{i,k}, \bm e_{j,k} \rangle \right)}{\sum_{i^\prime \neq j, i^\prime \in C_l} \exp\left( \langle \bm e_{i^\prime,k}, \bm e_{j,k} \rangle\right) } \le \frac{1}{2}, \forall i \neq j, i \in C_l, j \in C_{m}, l, m \neq k.  \label{eq:exp ee}
    \end{align}
\end{lemma}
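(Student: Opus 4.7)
The plan is to prove items (i)--(iv) by distinct but closely related applications of Gaussian concentration: Lemma~\ref{lem:norm gau} for norms, a one-dimensional conditional Gaussian tail bound for the inner products, a lower Mill's ratio for the max in (iii), and a Chernoff sign-count argument for the denominator lower bound in (iv). The mild-looking hypotheses on $p$, $\delta$, and $N$ will each be used in exactly one place; my proposal tracks which hypothesis enters where, since the margins in (iii) and (iv) are tight.

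For (i), I would invoke Lemma~\ref{lem:norm gau} on $\bm a_i\sim\mathcal N(\bm 0,\bm I_p)$ with the lemma's noise parameter equal to $1$ and $t=2\sqrt{\log N}$, yielding failure probability $2N^{-2}$ per vector and the bound (\ref{eq:norm a}); the same choice of $t$ applied to $\bm e_{i,l}\sim\mathcal N(\bm 0,\delta^2 \bm I_p)$ gives (\ref{eq:norm e}). A union bound over the $N$ signal vectors and the $(K-1)N$ noise vectors produces the stated $2KN^{-1}$ failure probability. For (ii), I condition on one argument in each inner product so that the other-argument inner product becomes a scalar Gaussian with standard deviation equal to the norm of the conditioning vector (or $\delta$ times it in the $\bm e$--$\bm e$ case). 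The Gaussian tail $\mathbb P(|Z|>3\sqrt{\log N}\,\sigma)\le 2N^{-9/2}$, union-bounded over $O(N^2)$ pairs and the three pair-types, is comfortably below $4KN^{-2}$.

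For (iii), conditional on $\bm e_{j,k}$, the $\{\langle\bm a_i,\bm e_{j,k}\rangle\}_{i\in C_k}$ are i.i.d.\ $\mathcal N(0,\|\bm e_{j,k}\|^2)$. Using the lower Mill's bound $1-\Phi(\sqrt{\log N})\gtrsim N^{-1/2}/\sqrt{\log N}$, independence across $i\in C_k$ with $|C_k|=N/K$ gives
\begin{align*}
\mathbb P\!\left(\max_{i\in C_k}\langle\bm a_i,\bm e_{j,k}\rangle<\sqrt{\log N}\,\|\bm e_{j,k}\|\right)
\le \exp\!\left(-c\,\frac{\sqrt{N}}{K\sqrt{\log N}}\right).
\end{align*}
Here the assumption $N\gtrsim K^2\log^3 N$ is exactly what I need so that the exponent exceeds $2\log N+\log K$, after which a union bound over the $O(KN)$ triples $(k,l,j)$ yields the $2N^{-1}$ bound; indeed, replacing $\log^3$ by $\log^2$ would fail at this step, so I will flag this as the first tight place.

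For (iv), I would first use (i) and the hypotheses $\delta\sqrt{p}\le\sqrt{\log N}/8$ and $p\gtrsim\log N$ to pin down $\|\bm e_{j,k}\|\le\sqrt{\log N}/4$. Then, conditional on $\bm e_{j,k}$, the indicators $\{\mathbb I(\langle\bm a_{i'},\bm e_{j,k}\rangle\ge 0)\}_{i'\in C_k}$ are i.i.d.\ $\mathrm{Bern}(1/2)$, so a Chernoff bound gives that at least $m/4=N/(4K)$ of them are positive (failure probability $e^{-m/32}$), hence the denominator is at least $N/(4K)$. The numerator is at most $\exp(3\sqrt{\log N}\,\|\bm e_{j,k}\|)\le N^{3/4}$ by (ii), so the ratio is at most $4KN^{-1/4}\le 1/2$ for sufficiently large $N$. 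The second softmax bound is handled identically after noting $\delta\|\bm e_{j,k}\|\le\sqrt{\log N}/32$, which makes the numerator even smaller ($\le N^{3/32}$). Union bounding over the $O(N^2)$ relevant tuples $(k,i,j,l)$ together with the earlier bounds yields the $4KN^{-1}$ failure probability. The main obstacle throughout is constant bookkeeping in (iv): the $N^{3/4}$ numerator only barely loses to the $N/(4K)$ denominator, and this margin collapses without the full strength of $\delta\le\tfrac{1}{8}\sqrt{\log N/p}$ and $p\ge 16(\sqrt{\log N}+1)^2$, so those hypotheses will be threaded carefully into the $\|\bm e_{j,k}\|\le\sqrt{\log N}/4$ estimate.
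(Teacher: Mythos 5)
Your proposal is correct and follows the paper's approach for items (i)--(iii) nearly verbatim: Lemma~\ref{lem:norm gau} with $t=2\sqrt{\log N}$ for (i), a conditional scalar-Gaussian tail for (ii), and the lower Mill's ratio with independence over $i\in C_k$ for (iii). One useful improvement: the paper's proof of (iii) derives the bound $1-N^{-1}$ for a \emph{single} choice of $(j,k)$ and then states the lemma (which quantifies over all $j\in C_l$, $l\neq k$, $k\in[K]$) with probability $1-2N^{-1}$ without explicitly taking a union bound; you correctly flag that the $O(KN)$ union is needed and that the hypothesis $N\gtrsim K^2\log^3 N$ is what makes the per-triple exponent dominate $2\log N+\log K$. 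You should be aware, though, that under $N\geq 8\pi K^2\log^3 N$ the per-triple exponent only reaches about $\log N$, not $2\log N+\log K$; a slightly larger constant (e.g.\ $32\pi$) would be needed for a clean union bound. This is the same loose-constant territory the paper itself occupies, so I would not call it a gap in your argument, but your claim of exactness is slightly optimistic.

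For (iv) you take a genuinely different route for the denominator. The paper bounds $\sigma=\|\bm e_{j,k}\|\leq\tfrac{3}{2}\delta\sqrt p$, then shows $|X_{i'}|=|\langle\bm a_{i'},\bm e_{j,k}\rangle|\leq 2\sigma\sqrt{\log N}\leq 3\delta\sqrt{p\log N}$ \emph{uniformly} over $i'\in C_k$, so that every term $\exp(X_{i'})\geq\exp(-3\delta\sqrt{p\log N})$ and hence $S\geq\tfrac{N}{K}\exp(-3\delta\sqrt{p\log N})$, giving a ratio $\leq\tfrac{K}{N}\exp(6\delta\sqrt{p\log N})\leq KN^{-1/4}$. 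You instead observe that, conditional on $\bm e_{j,k}$, the signs of $\langle\bm a_{i'},\bm e_{j,k}\rangle$ are i.i.d.\ $\mathrm{Bern}(1/2)$, so by Chernoff at least $N/(4K)$ terms are nonnegative (hence each at least $1$), giving $S\geq N/(4K)$ directly. Your numerator bound $N^{3/4}$ (versus the paper's $N^{3/8}$) is looser but still beats $N/(4K)$. Your argument is arguably cleaner since it avoids a uniform two-sided bound on all $X_{i'}$ and only needs a sign count, though both reductions ultimately lean on the same final margin $N\gtrsim K^4$, which the paper also invokes (``$N\geq 16K^4$'') and which, strictly speaking, is not implied by (\ref{eq:N1}) alone but is absorbed by the ``sufficiently large $N$'' clause in Theorem~\ref{thm:1}.
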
 
\begin{proof}
	(i)  Applying \Cref{lem:norm gau} to $\bm a_{i} \sim \mathcal{N}(\bm 0, \bm I_p)$ with $t=2\sqrt{\log N}$ yields 
     \begin{align*}
        \mathbb{P}\left( \left|\|\bm a_i\| - \sqrt{p}\right| \le 2(\sqrt{\log N} + 1) \right) \ge 1 - 2N^{-2}.
    \end{align*}
    This, together with the union bound, yields that (\ref{eq:norm a}) holds for all $i \in [N]$ with probability at least $1-2N^{-1}$. Using the same argument, we obtain that (\ref{eq:norm e}) holds for all $i \in C_k$ and $l \neq k \in [K]$ with probability at least $1-2(K-1)N^{-1}$. Finally, applying the union bound yields that the probability is $1-2KN^{-1}$.   

	(ii) For each pair $(i,j)$ with $i \neq j \in C_k$ and $k \in [K]$, conditioned on $\bm a_i$, we have $\langle \bm a_i, \bm a_j \rangle \sim \mathcal{N}(0,\|\bm a_i\|^2)$. According to the tail bound the Gaussian random variable, we have 
    \begin{align*}
        \mathbb{P}\left( \left| \langle \bm a_i, \bm a_j \rangle \right| \ge 3\|\bm a_i\|\sqrt{\log N} \Big\vert \bm a_i\right) \le 2 N^{-4}. 
    \end{align*}
     This, together with the union bound, implies that conditioned on $\bm a_i$, it holds with probability at least $1-2 N^{-2}$ that $|\langle \bm a_i, \bm a_j \rangle | \le 2\|\bm a_i\|\sqrt{\log N}$ for all $i\neq j \in C_k$ and $k \in [K]$. 
     Using the same argument, we obtain (\ref{eq:norm ae}) and (\ref{eq:norm eij}). Finally, applying the union bound yields the probability.

	(iii)  Conditioned on $\bm e_{j,k}$, we obtain that $X_i := \langle \bm a_i, \bm e_{j,k} \rangle/\|\bm e_{j,k}\| \sim \mathcal{N}(0,1)$ for each $i \in C_k$ are i.i.d. standard normal random variables. 
     Then, we have
     \begin{align}\label{eq1:lem norm ae}
         \mathbb{P}\left( \max_{i \in C_k} X_i \ge \sqrt{\log N}\right) = 1 - \left(\mathbb{P}\left( X_1 < \sqrt{\log N}\right)\right)^{N_k}. 
     \end{align} 
     Using the property of the standard Gaussian random variable, we have  
     \begin{align*}
         \mathbb{P}\left( X_1 \ge t\right) \ge \left(\frac{1}{t} - \frac{1}{t^3} \right) \frac{1}{\sqrt{2\pi}} \exp\left(-\frac{t^2}{2}\right). 
     \end{align*}
     Taking $t=\sqrt{\log N}$, we obtain  
     \begin{align}
         \mathbb{P}\left( X_1 \ge \sqrt{\log N}\right) = \frac{1}{\sqrt{\log N}}\left(1 - \frac{1}{\log N} \right)\frac{1}{\sqrt{2\pi}}   \exp\left( -\frac{\log N}{2}\right)  \ge \frac{1}{2\sqrt{2\pi N\log N}},
     \end{align}
     where the inequality follows from $N \ge \exp(2)$. Substituting this into (\ref{eq1:lem norm ae}) yields 
     \begin{align*}
         \mathbb{P}\left( \max_{i \in C_k} X_i \ge \sqrt{\log N}\right) & \ge 1 - \left( 1 - \frac{1}{2\sqrt{2\pi N\log N}}\right)^{N/K} \\
         & \ge 1 - \exp\left( -\frac{\sqrt{N}}{2K\sqrt{2\pi\log N}} \right) \ge 1 - N^{-1},
     \end{align*}
     where the second inequality uses $1-x \le \exp\left(-x\right)$ for all $x > 0$ and the last inequality follows from $N \ge 8\pi K^2\log^3N$. This, together with the definition of $X_i$, implies (\ref{eq:norm ae1}). 
     
    (iv) Conditioned on $\bm e_{j,k}$, we have $X_i := \langle \bm a_i, \bm e_{j,k} \rangle \sim \mathcal{N}(0,\|\bm e_{j,k}\|^2)$ for each $i \in C_k$ are i.i.d. normal random variables. Suppose that (\ref{eq:norm e}) holds for all $i \in C_k, l \neq k \in [K]$, which happens with probability at least $1-2(K-1)N^{-1}$ according to (i). This implies for all $j \in C_k$ and $k \in [K]$, 
    \begin{align}\label{eq4:lem norm ae}
    \|\bm e_{j,k}\| \le \delta\left( \sqrt{p} + 2\sqrt{\log N} + 2 \right) \le \frac{3}{2}\delta\sqrt{p},
    \end{align}
    where the last inequality follows from $p \ge 16(\sqrt{\log N} + 1)^2$ due to (\ref{eq:N1}).
    For ease of exposition, let 
     \begin{align}\label{eq5:lem norm ae}
        \sigma := \|\bm e_{j,k}\|,\ S := \sum_{i \in C_k} \exp(X_i). 
     \end{align}
    Obviously, showing (\ref{eq:exp ae}) is equivalent to proving 
    \begin{align}\label{eq2:lem norm ae}
    	2\exp(X_i) \le \sum_{i^\prime \in C_k}\exp\left( X_{i^\prime}\right) = S,\ \forall i \in C_k. 
	\end{align}      
	Note that $X_i/\sigma \sim \mathcal{N}(0,1)$ for all $i \in C_k$. Using the tail bound of the standard normal random variable, we have 
	\begin{align*}
	\mathbb{P}\left( \frac{|X_i|}{\sigma} \ge 2\sqrt{\log N} \right) \le 2N^{-2},\ \forall i \in C_k.  
	\end{align*}
	This, together with the union bound, yields that it holds with probability $1-2N^{-1}$ that $|X_i| \le 2\sigma\sqrt{\log N}$ for all $i \in [N]$. Using this, (\ref{eq4:lem norm ae}), (\ref{eq5:lem norm ae}), and the union bound, we obtain with probability at least $1-2KN^{-1}$,  
    \begin{align*}
       |X_i| \le 3\delta \sqrt{p\log N},\ \forall i\in [N]. 
    \end{align*}
     Therefore, we have 
	\begin{align}\label{eq3:lem norm ae}
	\exp\left( -3\delta\sqrt{p \log N} \right) \le \exp(X_i) \le \exp\left( 3\delta\sqrt{p \log N} \right),\ \forall i \in [N].  
	\end{align}
Using this and (\ref{eq5:lem norm ae}), we have
\begin{align*}
S \ge \frac{N}{K}\exp\left( -3\delta\sqrt{p\log N} \right). 
\end{align*}
This, together with (\ref{eq3:lem norm ae}), implies that proving (\ref{eq2:lem norm ae}) is sufficient to proving 
\begin{align*}
\log N \ge 6\delta\sqrt{p \log N} + \log\left(2K\right),
\end{align*}
which holds when $N \ge \max\{16K^4, \exp\left(64\delta^2 p\right)\}$ due to (\ref{eq:N1}). According to the union bound, (\ref{eq:exp ae}) holds with probability at least $1-2KN^{-1}$. Using the same argument, (\ref{eq:exp ee}) holds with probability at least $1-2KN^{-1}$. 
%	For each $i \in C_k$, we compute
%     \begin{align*}
%         \E[\exp\left( X_i\right)] & = \frac{1}{\sqrt{2\pi\sigma^2}}\int_{-\infty}^\infty \exp(x)\exp\left( -\frac{x^2}{2\sigma^2}\right) dx\\ & = \exp\left(\frac{\|\bm e_{j,k}\|^2}{2}\right) \frac{1}{\sqrt{2\pi\sigma^2}} \int_{-\infty}^\infty  \exp\left( -\frac{(x-\sigma^2)^2}{2\sigma^2}\right) dx = \exp\left(\frac{\sigma^2}{2}\right). 
%     \end{align*}
\end{proof}

\subsection{Proof of \Cref{thm:1}}

To simplify our development, let 

\begin{align}
\bm M_1 & :=  \begin{bmatrix}
    \theta^2\bm A_1^T\bm A_1 & \theta \bm A_1^T \bm E_{2,1} & \dots & \theta\bm A_1^T \bm E_{K,1} \\
    \theta\bm E_{2,1}^T\bm A_1 &  \bm E_{2,1}^T\bm E_{2,1} & \dots & \bm E_{2,1}^T\bm E_{K,1} \\
    \vdots & \vdots & \ddots & \vdots \\
    \theta \bm E_{K,1}^T\bm A_1 & \bm E_{K,1}^T \bm E_{2,1} & \dots & \bm E_{K,1}^T\bm E_{K,1} 
\end{bmatrix} \in \R^{N\times N},\label{eq:M1}\\
\bm M_2 & :=  \begin{bmatrix}
    \bm E_{1,2}^T\bm E_{1,2} & \theta \bm E_{1,2}^T\bm A_2  & \dots & \bm E_{1,2}^T \bm E_{K,2} \\
    \theta \bm A_2^T \bm E_{1,2}^T & \theta^2\bm A_2^T \bm A_{2} & \dots & \theta \bm A_{2}^T\bm E_{K,2} \\
    \vdots & \vdots & \ddots & \vdots \\
     \bm E_{K,2}^T\bm E_{1,2} & \theta \bm E_{K,2}^T \bm A_{2} & \dots & \bm E_{K,2}^T\bm E_{K,2} 
\end{bmatrix} \in \R^{N\times N},\notag\\
 &\qquad\qquad\qquad\qquad\qquad\quad \vdots \notag \\
\bm M_K & :=  \begin{bmatrix}
    \bm E_{1,K}^T\bm E_{1,K} &  \bm E_{1,K}^T\bm E_{2,K}  & \dots & \theta \bm E_{1,K}^T \bm A_{K} \\
    \bm E_{2,K}^T \bm E_{1,K} & \bm E_{2,K}^T \bm E_{2,K} & \dots & \theta\bm E_{2,K}^T \bm A_k \\
    \vdots & \vdots & \ddots & \vdots \\
    \theta \bm A_{K}^T\bm E_{1,K} & \theta\bm A_{K}^T \bm E_{2,K} & \dots & \theta^2\bm A_{K}^T\bm A_{K} 
\end{bmatrix} \in \R^{N\times N}.\notag
\end{align}
where $\theta \ge 1$. Recall that 
\begin{align}\label{eq:Z0}
\bm Z^{(0)} = \begin{bmatrix}
 \bm Z_1^{(0)} & \dots & \bm Z_K^{(0)}
\end{bmatrix} = \begin{bmatrix}
\bm U_1\bm A_1 + \sum_{j \neq 1} \bm U_j \bm E_{1,j} & \dots   &  \bm U_K\bm A_K + \sum_{j \neq K} \bm U_j \bm E_{K,j}
\end{bmatrix},
\end{align}

\begin{lemma}\label{lem:phi M}
Consider the setting in \Cref{def:MoG} with $p=p_1=\dots=p_K$ and $N_1=\dots=N_K=N/K$. Let $\varphi(\cdot)$ be 
\begin{align}\label{eq:relu}
    \varphi(\bm x) = h(\sigma(\bm x)),
\end{align}
where $\sigma:\R^N \to \R^N$ is the soft-max function and $h:\R^N \to \R^N$ is an element-wise thresholding function with $h(x) = \tau \mathbb{I}\left\{x > \tau\right\}$ for each $i \in [N]$. Suppose that (\ref{eq:N1}) holds. Suppose in addition that $p \ge 64(\sqrt{\log N}+1)^2$ and 
\begin{align}\label{eq:tau}
\tau \in \left( \frac{1}{2},  \frac{1}{1+N\exp(-9p/32)} \right]
\end{align}
The following statements hold with probability at least $1 - KN^{-\Omega(1)}$ that , 
\begin{align}\label{eq:softmax}
\varphi(\bm M_1) = \mathrm{BlkDiag}(\tau \bm I, \bm 0, \dots,\bm 0),\ \dots,\ \varphi(\bm M_K) = \mathrm{BlkDiag}(\bm 0, \bm 0, \dots,\tau \bm I).
\end{align}
\end{lemma}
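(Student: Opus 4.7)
The plan is to analyze the matrix $\bm M_1$ column by column; by the symmetry of the construction across $k\in[K]$, the other $\bm M_k$ follow identically. First I would use the high-probability bounds of Lemma~\ref{lem:norm ae} to characterize the entries of $\bm M_1$, then apply the soft-max $\sigma(\cdot)$ column by column, and finally apply the threshold $h(x)=\tau\mathbb{I}\{x>\tau\}$. Two cases arise depending on which cluster the column index belongs to.

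For a column indexed by $j\in C_1$, the diagonal entry $\theta^2\|\bm a_j\|^2$ is bounded below by $\theta^2\cdot 9p/16$ using (\ref{eq:norm a}) together with $p\ge 64(\sqrt{\log N}+1)^2$. The other block-$1$ entries satisfy $\theta^2|\langle\bm a_i,\bm a_j\rangle|\le 15\theta^2\sqrt{p\log N}/4$ by (\ref{eq:norm aij}) and the upper bound on $\|\bm a_i\|$, while entries from other blocks carry an additional factor of $\delta$ and are even smaller. After plugging in $\sqrt{p\log N}\le p/8$ and choosing constants carefully, the gap between the diagonal and the next largest entry in the column is at least $9p/32$. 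This translates into the soft-max lower bound $[\sigma(\bm M_1)]_{jj}\ge 1/(1+N\exp(-9p/32))\ge \tau$ by the hypothesis on $\tau$. Since $\tau>1/2$, the remaining soft-max mass $1-[\sigma(\bm M_1)]_{jj}<\tau$ is distributed across the other $N-1$ entries, so each of them has value strictly below $\tau$; after thresholding the diagonal becomes $\tau$ and every other entry becomes $0$.

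For a column indexed by $j\in C_l$ with $l\ne 1$, every entry has modest magnitude: block-$1$ entries $\theta\bm a_i^T\bm e_{j,1}$ are bounded by $3\theta\sqrt{\log N}\|\bm e_{j,1}\|$ via (\ref{eq:norm ae}); noise-noise off-diagonal entries $\bm e_{i,1}^T\bm e_{j,1}$ are smaller by an additional factor of $\delta$ via (\ref{eq:norm eij}); and the diagonal $\|\bm e_{j,1}\|^2$ is of order $\delta^2 p\lesssim \log N$. I would invoke Lemma~\ref{lem:norm ae}(iv): inequality (\ref{eq:exp ae}) bounds the relative mass of each block-$1$ exponential in the block-$1$ partial sum by $1/2$, and (\ref{eq:exp ee}) plays the analogous role for the noise-noise off-diagonal entries. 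For the diagonal entry, I would use (\ref{eq:norm ae1}) to guarantee at least one comparably large block-$1$ exponential in the denominator. Since the full denominator only grows larger by including the remaining entries, every column-wise soft-max value is at most $1/2<\tau$, and thresholding kills the entire column. A union bound over the $N$ columns and over $k\in[K]$, combined with the failure probabilities of Lemma~\ref{lem:norm ae}, yields the claimed probability $1-KN^{-\Omega(1)}$.

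The main obstacle I expect is carefully tracking the scaling factor $\theta$ inside the soft-max for the noise columns. The estimates in Lemma~\ref{lem:norm ae}(iv) are stated without a $\theta$ factor, whereas the signal-noise blocks of $\bm M_1$ carry a multiplicative $\theta$. The remedy is to rerun the argument behind (\ref{eq:exp ae})/(\ref{eq:exp ee}) with $\theta X$ in place of $X$, exploiting $|\theta X_i|\lesssim \theta\delta\sqrt{p\log N}$ together with $\delta\lesssim \sqrt{\log N/p}$ so that the exponent remains tractable; meanwhile, in the $C_1$-columns the quadratic term $\theta^2 p$ always dominates the $\theta^2\sqrt{p\log N}$ cross-terms, so the analysis of the first case is insensitive to $\theta$.
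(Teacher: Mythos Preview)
Your proposal is correct and follows essentially the same two-case, column-by-column argument as the paper's proof, invoking the same parts of Lemma~\ref{lem:norm ae} at the same places and arriving at the same softmax gap $9p/32$ in Case~1. The paper handles the $\theta$ factor in Case~2 exactly as you anticipate---by citing (\ref{eq:exp ae}) and (\ref{eq:exp ee}) and implicitly relying on their $\theta$-scaled analogues---so the obstacle you flag and the remedy you propose match the paper's own treatment.
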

\begin{proof}
    Suppose that (\ref{eq:norm a})-(\ref{eq:exp ee}) hold, which happens with probability at least $1-KN^{-\Omega(1)}$ according to \Cref{lem:norm ae}, (\ref{eq:N1}), and the union bound. Now, we focus on studying $\bm M_1$ as defined in (\ref{eq:M1}).  For ease of exposition, we denote the $i$-th column of $\bm M_1$ by $\bm m_i \in \R^N$ for each $i \in [N]$. Moreover, recall that 
\begin{align*}
C_1 = \left\{1,2,\dots,\frac{N}{K}\right\},\dots, C_K = \left\{\frac{(K-1)N}{K}+1,\frac{(K-1)N}{K}+2,\dots,N\right\}. 
\end{align*}
We now divide our proof into two cases. We first study the $i$-th column of $\bm M_1$ for each $i\in C_1$, and then study the $i$-th column of $\bm M_1$ for each $i\in C_k$ with $k \neq 1$. 

\paragraph{Case 1.} According to (\ref{eq:M1}), we have for each $i \in C_1$,   
    \begin{align*}
        m_{ij} = \theta^2\langle \bm a_i, \bm a_j\rangle, \forall j \in C_1,\   m_{ij} = \theta\langle \bm a_i, \bm e_{j,k}\rangle, \forall j \in C_k, k \neq 1. 
    \end{align*}  
 For each pair $(i,j)$ with $i \neq j \in C_1$, we compute
    \begin{align}\label{eq1:lem softmax}
        \frac{\sigma_i(\bm m_{i})}{\sigma_j(\bm m_{i})} = \exp\left(  m_{ii} - m_{ij} \right) \ge \exp\left(  \theta\|\bm a_i\|\left( \theta\|\bm a_i\| - 3\sqrt{\log N}\right)  \right) \ge \exp\left(\frac{9\theta^2 p}{32}\right),
    \end{align}
    where the first inequality follows from (\ref{eq:norm aij}) and the second uses (\ref{eq:norm a}) and $\sqrt{p} \ge 8(\sqrt{\log N} + 1)$. Using the same argument, for each pair $(i,j)$ with $i\in C_1$, $j \in C_k$, and $k \neq 1$, we obtain
    \begin{align*}
        \frac{\sigma_i(\bm m_{i})}{\sigma_j(\bm m_{i})} \ge \exp\left(\frac{9\theta^2 p}{32}\right),
    \end{align*}
     This, together with $\sum_{j=1}^N \sigma_j(\bm m_i) = 1$, yields $\left( 1 + (N-1)\exp\left( -9\theta^2 p/32\right) \right)\sigma_i(\bm m_i) \ge 1$. Therefore, we have for each $i \in C_1$, 
    \begin{align}\label{eq2:lem softmax}
        \sigma_i(\bm m_i) \ge \frac{1}{1+N\exp(-9\theta^2p/32)} > \frac{1}{2},\ 
        \sigma_j(\bm m_i) \le \frac{1}{2},\ \forall j \neq i, 
    \end{align} 
     where the last inequality follows from $p \ge 64(\sqrt{\log N} + 1)^2$. This, together with the value of $\tau$ in (\ref{eq:tau}), yields for each $i \in C_1$, 
    \begin{align*}
        \sigma_j(\bm m_i) < \tau < \sigma_i(\bm m_i),\ \forall j \neq i. 
    \end{align*}
   Using this and (\ref{eq:relu}), we have for each $i \in C_1$, 
    \begin{align*}
        h\left( \sigma_i(\bm m_i) \right) = \tau,\ h\left( \sigma_j(\bm m_i) \right) = 0,\ \forall j \neq i. 
    \end{align*}
    
\paragraph{Case 2.} For each $i \in C_k$ with $k \neq 1$, it follows from (\ref{eq:M1}) that   
    \begin{align*}
        m_{ij} = \theta \langle \bm e_{i,1}, \bm a_j\rangle, \forall j \in C_1,\   m_{ij} = \langle \bm e_{i,1}, \bm e_{j,1}\rangle,\ \forall j \in C_l, l \neq 1. 
    \end{align*}
    Consider a fixed $i \in C_k$ with $k \neq 1$, it follows from (\ref{eq:norm ae1}) that there exists $j_i \in C_1$ such that $m_{ij_i} \ge \theta \|\bm e_{i,1}\|\sqrt{\log N}$.  This implies  
    \begin{align*}
        \frac{\sigma_{j_i}(\bm m_i)}{\sigma_i(\bm m_i)} & = \exp\left( \theta m_{ij_i} - m_{ii} \right) \ge \exp\left(\|\bm e_{i,1}\|\left(\theta \sqrt{\log N} - \|\bm e_{i,1}\| \right) \right) \\
        &  \ge  \exp\left( \frac{3\delta\theta}{4} \sqrt{p\log N} - \frac{25}{16} \delta^2 p \right),
    \end{align*}
    where the second inequality follows from (\ref{eq:norm e}). This, together with $\sigma_i(\bm m_i) + \sigma_{j_i}(\bm m_i) < 1$, implies 
    \begin{align}\label{eq3:lem softmax}
        \sigma_i(\bm m_i) < \frac{1}{1+\exp\left( 3\delta\theta\sqrt{p\log N}/4 - 25\delta^2p/{16} \right)} < \frac{1}{1+\exp\left( \delta\theta\sqrt{p\log N}/2 \right)} < \frac{1}{2},
    \end{align}  
    where the second inequality uses $\delta\sqrt{p}\le \sqrt{\log N}/8$ due to (\ref{eq:N1}). 
    On the other hand, it follows from (\ref{eq:exp ae}) and (\ref{eq:exp ee}) that 
    \begin{align*}
        \sigma_j(\bm m_i) \le \frac{1}{2}, \forall j \neq i. 
    \end{align*} 
     This, together with (\ref{eq3:lem softmax}), $\delta \le 1/8$, $\sqrt{p} \ge 8(\sqrt{\log N} + 1)$, and the value of $\tau$ by (\ref{eq:tau}), yields for each $i \in C_k$ with $k \neq 1$, 
    \begin{align}
        \sigma_j(\bm m_i) < \tau, \forall j \in [N]. 
    \end{align}
    This directly implies 
    \begin{align*}
        h\left(\sigma(\bm m_i)\right) = \bm 0,\ \forall i \in C_k, k \neq 1. 
    \end{align*} 
    Then, we have $\varphi(\bm M_1) = \begin{bmatrix}
        \tau \bm I & \bm 0 \\
        \bm 0 & \bm 0
    \end{bmatrix}$. Applying the same argument to $\bm M_2,\dots,\bm M_K$, we obtain (\ref{eq:softmax}). 
\end{proof}

Armed with the above result, we are ready to prove \Cref{thm:1}. 

\begin{proof}[Proof of \Cref{thm:1}]
For ease of exposition, let $\bm M_k^{(l)} := \bm Z^{(l)^T}\bm U_k\bm U_k^T\bm Z^{(l)}$ for each $k \in [K]$ and $l \in [L]$. Suppose that (\ref{eq:softmax}) holds, which happens with probability at least $1 - KN^{-\Omega(1)}$ according to (\ref{eq:N1}), and (\ref{eq:tau}), \Cref{lem:phi M}. We claim that for each $l \in [L]$, we have
\begin{align}\label{eq1:thm 1}
        \bm Z^{(l)} = \begin{bmatrix}
            \left(1 + \eta\tau \right)^l\bm U_1\bm A_1 + \sum_{j \neq 1} \bm U_j\bm E_{1,j} & \dots & \left(1 + \eta\tau \right)^l\bm U_K\bm A_K + \sum_{j \neq K} \bm U_j \bm E_{K,j} 
        \end{bmatrix}. 
\end{align} 
    This, together with (\ref{def:SNR}), yields for each $k \in [K]$ and $l \in [L]$, 
    \begin{align*}
    \mathrm{SNR}(\bm Z^{(l)}_k) & = \frac{\|\bm U_k\bm U_k^T\bm Z_k^{(l)}\|_F}{\|(\bm I - \bm U_k\bm U_k^T)\bm Z_k^{(l)}\|_F} =  \frac{(1+\eta\tau)^l\| \bm A_k\|_F}{ \|\sum_{j \neq k} \bm U_j \bm E_{k,j}\|_F},
    \end{align*}
    which directly implies (\ref{eq:SNR}) for each $k \in [K]$ and $l \in [L-1]$. According to the union bound, the probability is $1-KLN^{-\Omega(1)}$. 
 	
    The rest of the proof is devoted to proving the claim (\ref{eq1:thm 1}) using the induction method. First, we consider the base case $l=1$. According to (\ref{eq:Z0}) and (\ref{eq:orth}), we compute
    \begin{align*}
    & \bm U_1\bm U_1^T\bm Z^{(0)} = \begin{bmatrix}
    \bm U_1\bm A_1 & \bm U_1\bm E_{2,1} & \dots & \bm U_1\bm E_{K,1}
    \end{bmatrix}, \\
    & \bm M_1^{(0)} = ( \bm U_1\bm U_1^T\bm Z^{(0)})^T ( \bm U_1\bm U_1^T\bm Z^{(0)}) = \begin{bmatrix}
    \bm A_1^T\bm A_1 & \bm A_1^T \bm E_{2,1} & \dots & \bm A_1^T \bm E_{K,1} \\
    \bm E_{2,1}^T\bm A_1 &  \bm E_{2,1}^T\bm E_{2,1} & \dots & \bm E_{2,1}^T\bm E_{K,1} \\
    \vdots & \vdots & \ddots & \vdots \\
    \bm E_{K,1}^T\bm A_1 & \bm E_{K,1}^T \bm E_{2,1} & \dots & \bm E_{K,1}^T\bm E_{K,1} 
\end{bmatrix}.      
	\end{align*}     	
	Using the same argument, we can compute $\bm M_k^{(0)}$ for each $k \in [K]$. 
 	This, together with (\ref{eq:softmax}) for each $k \in [K]$, yields
 	\begin{align*}
 	\sum_{k=1}^K \bm U_k\bm U_k^T \bm Z^{(0)} \varphi (\bm M_k^{(0)}) = \begin{bmatrix}
 	\tau \bm U_1\bm A_1 & \tau \bm U_2\bm A_2 & \dots & \tau \bm U_K\bm A_K
 	\end{bmatrix}. 
	\end{align*} 	 
	Using this, (\ref{eq:Z0}), and (\ref{eq:MSSA}), we directly obtain that (\ref{eq1:thm 1}) holds for $l=1$.  Next, we consider the case $l \ge 2$. Suppose that (\ref{eq1:thm 1}) holds for some $l\ge 1$. We compute  
	\begin{align*}
    & \bm U_1\bm U_1^T\bm Z^{(l)} = \begin{bmatrix}
    (1+\eta\tau)^l \bm U_1\bm A_1 & \bm U_1\bm E_{2,1} & \dots & \bm U_1\bm E_{K,1}
    \end{bmatrix}, \\
    & \bm M_1^{(l)} = \begin{bmatrix}
    (1+\eta\tau)^{2l}\bm A_1^T\bm A_1 & (1+\eta\tau)^l \bm A_1^T \bm E_{2,1} & \dots & (1+\eta\tau)^l\bm A_1^T \bm E_{K,1} \\
    (1+\eta\tau)^l \bm E_{2,1}^T\bm A_1 &  \bm E_{2,1}^T\bm E_{2,1} & \dots & \bm E_{2,1}^T\bm E_{K,1} \\
    \vdots & \vdots & \ddots & \vdots \\
    (1+\eta\tau)^l \bm E_{K,1}^T\bm A_1 & \bm E_{K,1}^T \bm E_{2,1} & \dots & \bm E_{K,1}^T\bm E_{K,1} 
\end{bmatrix}.      
	\end{align*}  
	Using the same argument, we can compute $\bm M_k^{(l)}$ for each $k \in [K]$. This, together with (\ref{eq:softmax}) for each $k \in [K]$, yields
 	\begin{align*}
 	\sum_{k=1}^K \bm U_k\bm U_k^T \bm Z^{(0)} \varphi (\bm M_k^{(0)}) = \begin{bmatrix}
 	(1+\eta\tau)^l\tau \bm U_1\bm A_1 & (1+\eta\tau)^l\tau \bm U_2\bm A_2 & \dots & (1+\eta\tau)^l \tau\bm U_K\bm A_K
 	\end{bmatrix}. 
	\end{align*}  
	Using this, (\ref{eq:Z0}), and (\ref{eq:MSSA}), we directly obtain that (\ref{eq1:thm 1}) holds for $l+1$. Then, we prove the claim. 
\end{proof}

\section{Additional Experiemental Details}\label{app:details} 

\subsection{Language Model Configuration}

We provide details of the language model architecture in~\Cref{tab:lang_conf}
\begin{table*}
\caption{The architectures of GPT-2 and AoT models. For GPT-2, each layer consists of an attention operator and an MLP, while each layer only has an attention operator in AoT.}\smallskip
\centering
\begin{footnotesize}
\begin{tabular}{l|cccccc}
% \hline
\toprule
 Models &  Num. of layers & Embedding dim. & Num. of heads & Attention type & Has MLP\\
 % \hline
 \midrule
 AoT-MSSA-L Base & 24 & 1024 & 16 & MSSA & No \\
 AoT-MSSA-L Medium & 36 & 1280 & 20 & MSSA & No \\
AoT-MHSA-L Base & 24 & 896 & 14 & MHSA & No \\
 GPT-2 Base & 12 & 768 & 12& MHSA & Yes\\
\bottomrule
\end{tabular}
\label{tab:lang_conf}
\end{footnotesize}
\end{table*}  

\subsection{ICL Configuration}
We study decoder-only transformer models in GPT-2 family~\cite{radford2019language} and its corresponding AoT variants. As in~\cite{park2024mambalearnlearncomparative}, we perform the same grid search over learning rates in $\{10^{-4},5\times 10^{-5},2\times 10^{-4},4\times 10^{-4}\}$, and clipping the gradient norm in $\{5.0,10.0,50.0\}$. 

\begin{table*}[t]
\caption{The detailed architectures of transformer and AoT used in ICL experiments. To ensure a fair comparison, all AoT models are designed with a larger number of layers to match the size of the transformer.}\smallskip 
\centering
\begin{footnotesize}
\begin{tabular}{l|cccccc}
% \hline
\toprule
 Models  &  Num. of para.  & Num. of layers & Embedding dim. & Num. of heads & Attention type & Has MLP\\
 % \hline
 \midrule
 AoT-MSSA-L & 7.5M & 32 & 128 &8&MSSA&No \\
 AoT-MHSA-L & 8.55M & 32 & 128 &8&MHSA&No \\
 Transformer & 9.63M & 16 & 128 &8&MHSA&Yes \\
\bottomrule
\end{tabular}
\label{tab:icl_conf}
\end{footnotesize}
\end{table*}  

\subsection{Emergence of Semantic Meaning}
\label{app:semantic}

The attention heads in our models have different semantic meanings, and indeed demonstrate the interpretability of our proposed architecture in practice. In \Cref{fig:emergence}, we train the AoT model with the MSSA operator on ImageNet-1K and visualize the self-attention heatmaps between the [CLS] token and other image patches. Note that the [CLS] token is the “class
token”, a trainable model parameter inserted along with other image tokens to represent the class information. We select 5 attention heads by manual inspection and find that they capture different parts of objects, displaying different semantic meanings.

\begin{figure*}[t]
\begin{center}
        \includegraphics[width=1\textwidth]{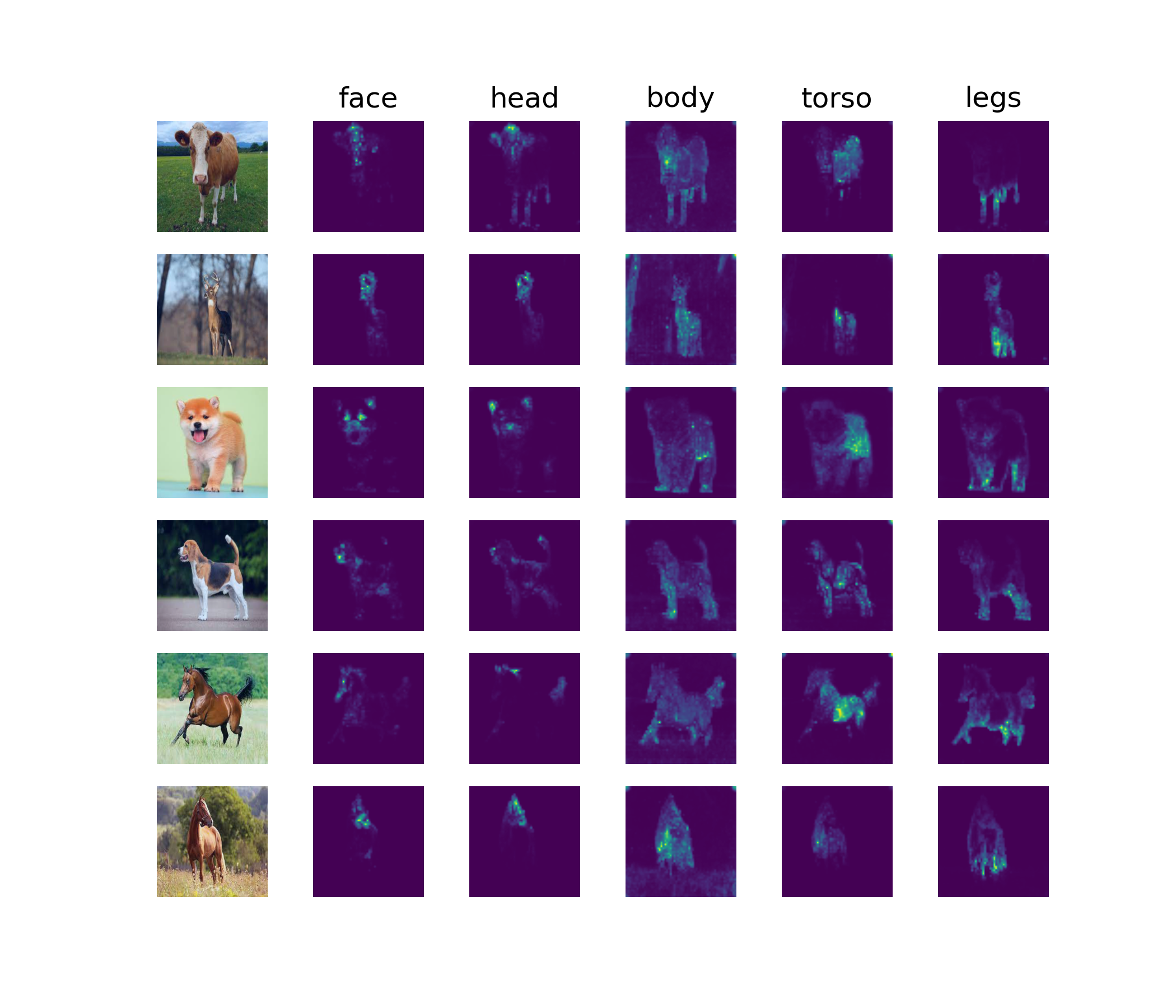}\vspace{-0.5in}
    \caption{\textbf{Visualization of attention heads on ImageNet-1K.} We feed a trained AoT-MSSA a mini-batch of images and extract the attention maps of different heads from the penultimate layer. We show that these heads capture certain semantic meanings across different images.}\label{fig:emergence}
\end{center}
\end{figure*} 

\end{document}